\newtheorem{theorem}{Theorem}
\newtheorem{lemma}[theorem]{Lemma}
\newtheorem{defn}{Definition}[section]
\DeclarePairedDelimiter\abs{\lvert}{\rvert}%
\DeclarePairedDelimiter\floor{\lfloor}{\rfloor}
\DeclarePairedDelimiter{\ceil}{\lceil}{\rceil}
\title{Online learning with feedback graphs \\ and switching costs}
\author{ Anshuka Rangi\\University of California, San Diego  \And Massimo Franceschetti\\
University of California, San Diego 
}
\begin{document}
\maketitle
\begin{abstract}
We study online learning when partial
feedback information is provided following every action of the learning process, and the learner incurs switching costs for changing his actions.  In this setting, the feedback information system  can be represented by a graph, and previous works studied the expected regret of the learner in the case of  a clique (Expert setup), or disconnected single loops (Multi-Armed Bandits (MAB)). This work provides a lower bound on the expected regret in the Partial Information (PI) setting, namely for general feedback graphs --excluding the clique. Additionally, it shows that all  algorithms  that are optimal without switching costs  are necessarily  sub-optimal in the presence of switching costs, which motivates the need to design new algorithms.  
We propose two new algorithms: Threshold Based EXP3 and EXP3.SC. 
For the two special cases of  symmetric PI setting and MAB, the expected regret of both of these algorithms is order optimal in the duration of the learning process. 
Additionally,  Threshold Based EXP3 is order optimal in the switching cost, whereas EXP3.SC   is not. 
Finally, empirical evaluations show that Threshold Based EXP3 outperforms the  previously proposed order-optimal algorithms   EXP3 SET in the presence of switching costs, and   Batch EXP3   in the MAB setting with switching costs.
\end{abstract}

\section{Introduction}
Online learning has a wide variety of applications like classification, estimation, and ranking, and it has been investigated in different areas, including learning theory, control theory, operations research, and statistics. The problem can be viewed as  a one player game against an adversary.  The game runs for $T$ rounds and at each round  the player chooses an action   from a given set of  $K$ actions. Every action $k\in [K]$
performed at   round  $t\in [T]$
carries a loss, that is a real number in the interval $[0,1]$. The losses for all pairs $(k,t)$  are assigned   by the adversary before the game starts.
The player also  incurs  a fixed and   known Switching Cost (SC)  every time he changes his action, that is an arbitrary real number $c>0$. 
The expected regret is   the 
expectation of  the sum of losses associated to the actions performed by the player plus the SCs minus  the losses incurred by the best fixed action in   hindsight.
The goal of the player is to minimize the expected regret over the duration of the game.

Based on the feedback information received after each action, online learning can be divided into three categories:
Multi-Armed Bandit (MAB),  Partial Information (PI), and Expert setting. In a MAB setting, at any given round the player only incurs the loss corresponding to the selected action, which implies the player only observes the loss of the selected action. In a PI setting, the player incurs the loss of the selected action $k \in [K]$, as well as  observes the losses that he would have incurred in that round by taking  actions in a subset of $[K]\backslash\{k\}$. This feedback system
can be viewed as a time-varying directed graph $G_t$ with $K$ nodes, where a directed edge $k \to j$ in $G_t$ indicates that performing an action $k$  at round $t$ also reveals the loss that the player would have incurred if action $j$ was  taken at round $t$. In an Expert setting,  taking an action reveals the losses  that the player would have incurred by taking any of  the   other actions in that round.  In this extremal case, the feedback system $G_t$ corresponds to a time-invariant, undirected clique. 

Online learning with PI has been used to design a variety of   systems \cite{gentile2014multilabel,katariya2016dcm,zong2016cascading,rangi2018distributed}. 
In these applications,  feedback   captures the idea of side information provided to the player during the   learning process. For example,   the performance of an employee can provide   information about the performance of other employees with similar skills, or the rating  of a  web  page  can provide  information on   ratings of  web pages with similar content.   
In most of these applications, switching between the actions is not free. For example, a company incurs a cost associated to the   learning phase while shifting an employee among different tasks, or switching the content of a web page frequently can  exasperate users  and force them to avoid visiting it. Similarly, re-configuring the production line in a factory  is  a costly process, and changing the  stock allocation in an investment portfolio is subject to certain fees. 
Despite the many  applications where both SC and PI are  an integral part of the learning process, the study of 
online learning with SC has been limited only to the MAB and Expert settings.
In the MAB setting, it has been shown that the expected regret of any player is at least $\tilde{\Omega}(c^{1/3} K^{1/3}T^{2/3})$ \cite{dekel2014bandits},  and that Batch EXP3  is an order optimal algorithm  \cite{arora2012online}. In the Expert setting, it has been shown that the expected regret is at least $\tilde{\Omega}(\sqrt{\log(K)T})$  \cite{cesa2006prediction}, and order optimal algorithms   have been proposed in \cite{geulen2010regret,gyorgy2014near}.  
The PI setup has been investigated only in the absence of SC, and for any fixed feedback system $G_{t}=G$ with independence number  $\alpha(G)>1$,  it has been shown that  the expected regret is at least $\tilde{\Omega}(\sqrt{\alpha(G)T})$~\cite{mannor2011bandits}. 

%
%


\begin{table*}[t]
\begin{center}
 \scriptsize{
\begin{tabular}{llll}
\textbf{Scenarios}  &\textbf{Threshold based EXP3} &\textbf{EXP3.SC} &\textbf{Lower Bound}\\
\hline \\
For all $t$, $G_{t}=G$           & $\tilde{O}(c^{1/3}(\mbox{mas}(G))^{1/3} T^{2/3})$ &$\tilde{O}(c^{4/3}(\mbox{mas}(G))^{2/3} T^{2/3})$ & $\tilde{\Omega}(c^{1/3}\alpha(G)^{1/3}T^{2/3})$\\
Symmetric PI &$\tilde{O}(c^{1/3}\alpha(G)^{1/3}T^{2/3})$  &$\tilde{O}(c^{4/3}\alpha(G)^{2/3}T^{2/3})$ & $\tilde{\Omega}(c^{1/3}\alpha(G)^{1/3}T^{2/3})$\\
MAB  &$\tilde{O}(c^{1/3}K^{1/3}T^{2/3})$  &$\tilde{O}(c^{4/3}K^{2/3}T^{2/3})$ & $\tilde{\Omega}(c^{1/3} K^{1/3}T^{2/3})$\\
$G_{1:T}$&$\tilde{O}(c \sum_{t=1}^{t^{*}}{\mbox{mas} (G_{(t)})/\mbox{mas}(G_{(T)})})$  &$\tilde{O}( \sum_{t=1}^{n^{*}}{\mbox{mas} (G_{(t)})/\mbox{mas}(G_{(T)})})$&$\tilde{\Omega}(c^{1/3}\beta(G_{1:T})^{1/3}T^{2/3})$ \\
Equi-informational &$\tilde{O}(c^{1/3}\alpha(G_{1})^{1/3}T^{2/3})$  &$\tilde{O}(c^{4/3}\alpha(G_{1})^{2/3}T^{2/3})$ &$\tilde{\Omega}(c^{1/3}\beta(G_{1:T})^{1/3}T^{2/3})$\\
\end{tabular}
}
\end{center}

\caption{Comparison of Threshold based EXP3 and EXP3.SC.} \label{contribution}

\end{table*}
 \subsection{Contributions}
 We provide a lower bound on the expected regret for any sequence of feedback graphs $G_{1},\ldots G_{T}$ in the PI  setting with SC. 
 We show that for any sequence of feedback graphs $G_{1:T}=\{G_{1},\ldots G_{T}\}$ with  independence sequence number $\beta(G_{1:T})>1$, the expected regret of any player is at least $\tilde{\Omega}(c^{1/3}\beta(G_{1:T})^{1/3}T^{2/3})$. We then   show that for 
 $G_{1:T}$ with $\alpha(G_{t})>1$ for all $t\leq T$, the expected regret of any player is at least $\tilde{\Omega}(c^{1/3}\sum_{G_{j}\in \mathcal{G}}\alpha(G_{j})^{1/3}N(G_{j})^{2/3})$, where $\mathcal{G}$ is the set of unique feedback graphs in the sequence $G_{1:T}$, and $N(G_{j})=\sum_{t=1}^{T}\mathbf{1}(G_{t}=G_{j})$ is the number of rounds for which the feedback graph $G_{j}$ is seen in $T$ rounds.  These results introduce a new figure of merit $\beta(G_{1:T})$ in the PI setting, which  can also be used to generalize the lower bound given in the PI setting without SC ~\cite{mannor2011bandits}. A consequence of these results is that   the presence of SC  changes the asymptotic regret by at least a factor $T^{1/6}$. Additionally, these results also recover the lower bound on the expected regret in the MAB setting  \cite{dekel2014bandits}. 
 
We also show that in the PI setting for any algorithm that is order optimal  without SC, there exists an assignment of losses from the adversary that forces the algorithm
to make at least $\tilde{\Omega}(T)$ switches, thus increasing its asymptotic  regret   by at least a factor $T^{1/2}$.  This shows  that   any algorithm that is order optimal in the PI setting without   SC,   is necessarily sub-optimal in the presence of SC, and  motivates the  development of  new  algorithms  in the PI setting and in the presence of SC.  

We propose two new algorithms for the PI setting with SC: 
 Threshold-Based EXP3 and EXP3.SC. Threshold-Based EXP3 requires the knowledge of $T$ in advance, whereas  EXP3.SC does not. The   performance of these algorithms is given for different scenarios in Table \ref{contribution}.  The  algorithms are order optimal in $T$ and $\beta(G_{1:T})$ for two special cases of feedback information system: symmetric PI setting i.e. the feedback graph $G_{t}=G$ is fixed and un-directed, and MAB. In these two cases,   $\beta(G_{1:T})$ equals $\alpha(G)$ and $K$ respectively. The  state-of-art algorithm EXP3 SET in PI setting without SC  is known to be order optimal only for these cases as well  \cite{alon2017nonstochastic}.   Threshold Based EXP3 is order optimal in the SC $c$ as well, while EXP3.SC  has an additional factor of $c$ in its expected regret. In the time-varying case, for sequence 
 $G_{1:T}$, the expected regret  is dependent on the worst $t^{*}$ and $n^{*}$ instances of the ratio of $\mbox{mas}(G_{t})$ 
 and  $\mbox{mas}(G_{(T)})$, where  $\{\mbox{mas}(G_{(1)}),   \mbox{mas}(G_{(2)}),  \ldots, \mbox{mas}(G_{(T)}) \}$ are the sizes of the maximal acyclic subgraphs of  $G_{1:T}$   arranged in non-increasing order, $t^{*}={\ceil{T^{2/3}c^{-2/3}\mbox{mas}^{1/3}(G_{(T)})}}$ and $n^*= 0.5\mbox{mas}^{1/3}(G_{(T)})T^{2/3}c^{1/3}$. Finally, Table \ref{contribution} also provides the performance in  the equi-informational setting, namely when  $G_{t}$ is undirected and  all the maximal acyclic subgraphs in $G_{1:T}$ have  the same size. The proofs of all these results are available online \cite{rangi2018online}. 

Numerical comparison 
shows that Threshold Based EXP3 outperforms   EXP3 SET   in the presence of SCs. Threshold Based EXP3 also outperforms Batch EXP3, which is another order optimal algorithm for the MAB setting with SC~\cite{arora2012online}.

\subsection{Related Work}
In the absence of SC, the lower bound on the expected regret  is known for all   three categories of online learning problems. In the MAB setting, the expected regret is at least $\tilde{\Omega}(\sqrt{KT})$~\cite{auer2002nonstochastic,cesa2006prediction,rangi2018unifying}. In the PI setting with fixed feedback graph $G$, the expected regret is at least $\tilde{\Omega}(\sqrt{\alpha(G)T})$~\cite{mannor2011bandits}. In the Expert setting, the expected regret is at least $\tilde{\Omega}(\sqrt{\log(K)T})$ \cite{cesa2006prediction}. All three cases present an asymptotic regret factor $T^{1/2}$. In contrast, in the presence of SC the expected regrets for MAB and Expert settings present different factors, namely $T^{2/3}$ and $T^{1/2}$ respectively. 
The expected regret is at least $\tilde{\Omega}(c^{1/3}K^{1/3}T^{2/3})$ in the MAB setting and $\tilde{\Omega}(\sqrt{\log(K)T})$  in the Expert setting \cite{dekel2014bandits}. This work provides the lower bound  on  the expected regret $\tilde{\Omega}(c^{1/3}\beta(G_{1:T})^{1/3}T^{2/3})$ for the  PI setting  in the presence of SC. For the case without SC, this work establishes that the lower bound in PI setting is $\tilde{\Omega}(\sqrt{\beta(G_{1:T})T})$.

The PI setting  was first considered in \cite{alon2013bandits,mannor2011bandits}, and many of its variations have been   studied without SC  
\cite{alon2015online, alon2013bandits,caron2012leveraging,rangi2018decentralized,langford2008epoch,kocak2016online,rangi2018consensus,wu2015online,rangi2018multi}.
In the adversarial setting we described, all of these algorithms are order optimal in the MAB and symmetric PI settings, but they  also require the player to have knowledge of the graph $G_t$ before performing an action. 
The algorithm EXP3 SET does not require such knowledge \cite{alon2017nonstochastic}. 
We show that all of these algorithms are sub-optimal in the PI setting with SC, and propose new algorithms that are order optimal in the MAB and  symmetric PI settings. 

In the expert setting with SC, there are two order optimal algorithms with expected regret $\tilde{O}(\sqrt{\log(K)T})$  \cite{geulen2010regret,gyorgy2014near}. In the MAB setting with SC, Batch EXP3 is an order optimal algorithm with expected regret $\tilde{O}(c^{1/3}K^{1/3}T^{2/3})$ \cite{arora2012online}. This algorithm has also been used  to solve a variant of the MAB setting  \cite{feldman2016online}. In the MAB setting, our algorithm has the same order of expected regret as Batch EXP3  but  it numerically outperforms Batch EXP3.

 There is a large  literature on a continuous variation of the MAB setting, where the number of actions $K$ depends on the number of rounds $T$. In this setting, the case without the SC was investigated in \cite{auer2007improved,bubeck2011x,kleinberg2005nearly,yu2011unimodal}. Recently,  the case  including  SC has also been studied  in \cite{koren2017bandits,koren2017multi}. In \cite{koren2017bandits},
 the algorithm Slowly Moving Bandits (SMB) has been proposed and  
in \cite{koren2017multi},  it has been extended to different settings.
These algorithms  incur an expected regret linear in $T$    when  applied in our discrete setting. 

\section{Problem Formulation}
Before the game starts, the adversary fixes a loss sequence $\ell_{1},\ldots,\ell_{T}\in [0,1]^{K}$,    assigning a loss in $[0,1]$ to $K$ actions for $T$ rounds. At round $t$, the player performs an action $i_{t}\in [K]$, and incurs the loss $\ell_{t}(i_{t})$ assigned by the adversary. If $i_{t}\neq i_{t-1}$, then the player also  incurs a  cost $c >0$ in addition to the loss $\ell_{t}(i_{t})$. 

In the PI setting, the feedback system can be viewed as a time-varying directed graph $G_t$ with $K$ nodes, where a directed edge $k \to j$ indicates that choosing action $k$  at round $t$ also reveals the loss that the player would have incurred if  action $j$ were  taken at round $t$. Let $S_{t}(i)=\{j: i\to j \mbox{ is a directed edge in }G_t\}$. Following the action $i_{t}$,  the player observes the losses he would have incurred in round $t$ by performing  actions in the subset $S_{t}(i_{t})\subseteq [K]$. Since the player always observes its own loss,  $i_{t}\in S_{t}(i_{t})$. 
In a MAB setup, the  feedback graph $G_{t}$   has only self loops, i.e. for all $t\leq T$ and $i\in[K]$, $S_{t}(i)=\{i\}$. In an Expert setup, $G_{t}$ is a undirected clique i.e. for all $t\leq T$ and $i\in[K]$, $S_{t}(i)=[K]$ . The expected regret of a player's strategy $\delta$ is defined as 
\begin{equation}\label{eq:RegretDef}
\begin{split}
        R^{\delta}(\ell_{1:T},c)&=\mathbf{E}\left[\sum_{t=1}^{T}\ell_{t}(i_{t}) + \sum_{t=2}^{T} c\cdot\textbf{1}(i_{t-1}\neq i_{t})\right]\\
        &\qquad-\min_{k\in [K]}\sum_{t=1}^{T}\ell_{t}(k).
\end{split}
\end{equation}
 In words, the expected regret is   the 
expectation of  the sum of losses associated to the actions performed by the player plus the SCs minus  the losses incurred by the best fixed action in the hindsight, and
the objective of the player is to minimize the expected regret.

\section{Lower Bound in PI setting with SC}
We start by defining the independence sequence number for a sequence of graphs  $G_{1:T}$.
\begin{defn} Given $G_{1:T}$,
let $P(G_{t})$ be the set of all the possible independent sets of the graph $G_{t}$. The independence sequence number $\beta(G_{1:T})$ is  the largest cardinality among all intersections of the independent sets $s_{1}\cap s_{2}\cap \ldots \cap s_{T} $, where $s_{t}\in P(G_{t})$. Namely, 
\begin{equation}
{\beta(G_{1:T})}=\max_{s_{1}\in P(G_{1}),\ldots s_{T}\in P(G_{T})}\abs{s_{1}\cap s_{2}\cap \ldots \cap s_{T}}.
\label{eq:iss}
\end{equation}
\end{defn}

\begin{defn}
The independence sequence set $\mathcal{I}(G_{1:T})$ is the set $s_1 \cap s_2 \cap \ldots s_T$ attaining the maximum in \eqref{eq:iss}.
\end{defn}

We use the notion of $\beta(G_{1:T})$ to provide a lower bound on the expected regret in the PI setting with SC.

\begin{theorem}\label{thm:lowerBound}
For any $G_{1:T}$   with  $\beta(G_{1:T})>1$, there exists a constant $b>0$ and an adversary's strategy (Algorithm \ref{alg:AdversaryStrategy}) such that for all $T\geq 27c\log_{2}^{3/2}(T)/\beta(G_{1:T})^{2}$, and  for any player's strategy $\mathcal{A}$, the expected regret of $\mathcal{A}$ is at least $b \, c^{1/3}\beta(G_{1:T})^{1/3}T^{2/3}/ \log T$.
\end{theorem}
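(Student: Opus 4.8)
The plan is to reduce the problem to a multi-armed-bandit instance with switching costs supported on the independence sequence set $\mathcal{I}=\mathcal{I}(G_{1:T})$, whose cardinality is $m:=\beta(G_{1:T})>1$, and then to adapt the multi-scale random-walk lower bound of Dekel et al.~\cite{dekel2014bandits} from the MAB case to $m$ arms. The starting observation is that because $\mathcal{I}$ is simultaneously an independent set of every $G_t$, we have $S_t(i)\cap\mathcal{I}=\{i\}$ for all $i\in\mathcal{I}$ and all $t$; hence, restricted to the arms of $\mathcal{I}$, the feedback is exactly bandit feedback, since playing $i\in\mathcal{I}$ reveals the loss of $i$ and of no other arm of $\mathcal{I}$. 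The adversary (Algorithm~\ref{alg:AdversaryStrategy}) therefore places the entire ``signal'' on $\mathcal{I}$ and makes the remaining $K-m$ arms both undesirable and uninformative.

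Concretely, I would first draw a hidden good arm $g$ uniformly from $\mathcal{I}$ and assign to each $i\in\mathcal{I}$ a loss $\ell_t(i)=W_t^{(i)}+\tfrac{1}{2}-\epsilon\,\mathbf{1}(i=g)+\eta_t^{(i)}$, where the $W^{(i)}$ are independent copies of a multi-scale Gaussian random walk on a depth-$O(\log T)$ binary tree over the time axis, $\eta$ is per-round noise, and $\epsilon,\sigma^2$ (the walk's increment variance) are free parameters, the whole expression being scaled so that losses lie in $[0,1]$ with high probability. Using \emph{independent} walks across arms is the key departure from the pure-MAB construction: it guarantees that even when the feedback graph reveals several arms of $\mathcal{I}$ in the same round, their walk values cannot be cancelled against one another, so the $\epsilon$-gap stays masked. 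For arms outside $\mathcal{I}$ I would assign a fixed loss $1$, independent of $g$, so that every round spent outside $\mathcal{I}$ costs at least $\tfrac{1}{2}$ in excess loss and directly observing such an arm is uninformative about $g$. I would then decompose the expected regret as $R\gtrsim \epsilon\cdot\mathbf{E}[\,\#\{t:i_t\neq g\}\,]+c\cdot\mathbf{E}[N]$, where $N$ is the number of switches, i.e. a policy-regret term plus a switching-cost term.

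The heart of the argument is an information bound showing that identifying $g$ is expensive in switches. Adapting the multi-scale analysis, I would prove that the mutual information between $g$ and the player's observations is $O\!\left(\epsilon^2\sigma^{-2}\log T\cdot\mathbf{E}[N]\right)$: the tree structure of each walk forces the information about arm $i$'s gap contributed by any single contiguous block of observations of $i$ to saturate at $O(\log T)$, and the number of such blocks is controlled by $\mathbf{E}[N]$, since every block boundary is either a switch of the player or is charged the $\tfrac{1}{2}$ excess loss of sitting on an outside ``spy'' arm. A Fano- or symmetrization-type argument over the uniform choice of $g$ then shows that unless $\mathbf{E}[N]\gtrsim \sigma^2/(\epsilon^2\log T)$, the player plays $g$ on at most a $1/m+o(1)$ fraction of rounds, so the policy-regret term is $\gtrsim \epsilon T(1-1/m)$.

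Finally I would balance the two regimes. Feeding the information bound into the decomposition gives, up to the $\log T$ factor, $R\gtrsim \epsilon T - C\epsilon^3\sigma^{-2}T\log T\cdot\mathbf{E}[N]+c\,\mathbf{E}[N]$, and optimizing over $\epsilon$ and $\sigma^2$ (the standard choice scales $\epsilon$ like $(c\,m/(T\log T))^{1/3}$ with $\sigma^2$ of constant order) yields the claimed $b\,c^{1/3}\beta(G_{1:T})^{1/3}T^{2/3}/\log T$; the hypothesis $T\ge 27c\log_2^{3/2}(T)/\beta(G_{1:T})^2$ is exactly what is needed to keep $\epsilon$ small enough that the losses remain in $[0,1]$ and the bound stays nontrivial. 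I expect the main obstacle to be twofold and concentrated in the third paragraph: first, rigorously transporting the single-walk, two-arm information lemma of \cite{dekel2014bandits} to $m$ independent walks while preserving the per-block saturation at $O(\log T)$; and second, the feedback-graph bookkeeping that ensures external edges into $\mathcal{I}$ give no cheaper route to $g$ than paying either in switches or in excess loss, which is precisely where the independent-walks design and the outside-arm loss penalty must be combined with care.
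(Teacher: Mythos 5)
Your proposal follows the same strategic skeleton as the paper's proof: both place a Dekel-et-al-style multi-scale Gaussian random walk construction on the independence sequence set $\mathcal{I}(G_{1:T})$ (using exactly your observation that $\mathcal{I}$ receives pure bandit feedback from within), make the arms in $[K]\backslash\mathcal{I}(G_{1:T})$ both costly and "charged for" informationally, bound the player's information by the number of switches plus the number of rounds spent outside $\mathcal{I}$, and balance $\epsilon$ against $c$ to obtain $b\,c^{1/3}\beta(G_{1:T})^{1/3}T^{2/3}/\log T$. The genuine differences are in the construction and the information machinery. The paper uses a \emph{single} walk $W_t$ common to all arms, biases the hidden arm by $-\epsilon_1$ and every arm outside $\mathcal{I}$ by $+\epsilon_2=\epsilon_1$, and controls information via per-arm total variation, Pinsker, and a chain-rule KL computation with a four-case analysis of whether arm $i$ is observed at $t$ and at $\rho(t)$, yielding $D_{KL}\leq \frac{\epsilon_1^2\omega(\rho)}{2\sigma^2}\mathbb{E}_0[N_i+T_\Delta]$. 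You use \emph{independent} walks per arm, deterministic loss $1$ outside $\mathcal{I}$, and a mutual-information/Fano argument with block counting. Your independent-walks choice is not cosmetic: with the paper's common walk, any round in which an arm $i\in\mathcal{I}$ is observed simultaneously with another arm (e.g., by playing an outside vertex with out-edges into $\mathcal{I}$, or an arm of $\mathcal{I}$ with an out-edge to an outside arm) lets the player subtract losses and read off $\mathbf{1}(X=i)$ \emph{noiselessly}, since the walk cancels; the paper's coordinate-wise case analysis charges such a round only $O(\epsilon_1^2/\sigma^2)$ of divergence and never confronts the degenerate within-round correlation. Your design eliminates this leak by construction, and it additionally makes the KL tensorize so that only the hidden arm's observation blocks contribute — arguably a cleaner and more robust route than the paper's own.

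What keeps your write-up from being a proof is concentrated where you yourself flag it. First, the central information lemma (per-block saturation at $O(\epsilon^2\sigma^{-2}\log T)$ for an independent multi-scale walk observed on an adaptively chosen set of rounds, with blocks counted against switches and spy rounds) is asserted rather than proven; it is fillable — with independent walks, $D_{KL}(\mathbb{P}_0\|\mathbb{P}_g)$ reduces to the divergence of arm $g$'s observations alone, to which the cut-width argument of Dekel et al. applies — but this transport must actually be carried out, including the adaptivity of the observation set. Second, clipping: with $\beta(G_{1:T})$ independent walks you cannot use the paper's single global event that the walk stays in $[1/6,5/6]$ (a union bound over arms degrades its probability); you need a per-round argument (each $W^{(i)}_t$ has variance $O(\sigma^2\log T)$, so per-round clipping probability is negligible), applied both to the regret decomposition and, via data processing, to the information bound. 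Third, a small factual correction: the paper invokes the hypothesis $T\geq 27c\log_2^{3/2}(T)/\beta(G_{1:T})^2$ in its final balancing step to ensure $\epsilon_1\geq c$ (so the $T_\Delta$ term has the right sign), not to keep the losses in $[0,1]$; that is handled by the clipping event. With these pieces supplied, your plan would yield the theorem.
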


 The proof of Theorem \ref{thm:lowerBound} relies on Yao's minimax principle \cite{yao1977probabilistic}. A randomized adversary strategy is constructed such that the expected regret of a player, whose action at any round is a deterministic function of his past observations, is at least $b \, c^{1/3}\beta(G_{1:T})^{1/3}T^{2/3}/ \log T$. This adversary strategy is described in Algorithm \ref{alg:AdversaryStrategy}, and is a generalization of the one proposed to establish similar bounds in the MAB setup \cite{dekel2014bandits}. The generalization is different than the one proposed for the PI setting without SC \cite{mannor2011bandits}.
 \begin{algorithm}[t]
\begin{algorithmic}
\State Input:$T>0$, $G_{1:T}$ with $\beta(G_{1:T})>1$;
\State Set $\epsilon_{1}=\epsilon_{2}=c^{1/3}\beta(G_{1:T})^{1/3}T^{-1/3}/9\log_{2}(T)$ and $\sigma = 1/9\log_{2}(T)$.
\State Choose an arm $X\in \mathcal{I}(G_{1:T})$ uniformly at random
\State Draw $T$  variables  such that  $\forall t\leq T$, $y_{t} \sim \mathcal{N}(0,\sigma^2)$.
\State For all $1\leq t\leq T$ and $i\in [K]$, assign
\[\ell_{t}(i)= W_{t}+0.5-\epsilon_{1}\textbf{1} (X=i)+\epsilon_{2}\textbf{1} (i\notin \mathcal{I}(G_{1:T})),\]
\[\ell_{t}(i) = clip(\ell_{t}(i)),\]
where $clip(a)=\min\{\max\{a,0\},1\}$, For all $ t\leq T$ $W_{t} = W_{\rho(t)}+y_{t},$, $W_{0}=0$, $\rho(t)= t-2^{\delta(t)}$ and $\delta(t)=\max\{i\geq 0: 2^{i} \mbox{ divides } t\}.$
\State Output: loss sequence $\ell_{1:T}$.
\caption{Adversary's strategy}
\label{alg:AdversaryStrategy}
\end{algorithmic}
\end{algorithm}
Since  $G_{1:T}$ is known to the adversary, it computes the independence sequence set $\mathcal{I}(G_{1:T})$, and the cardinality of this set is  $\beta{(G_{1:T})}$. For all $t\leq T$ and $i,j\in \mathcal{I}(G_{1:T})$, there exists no edge in the graph $G_{t}$ between the actions $i$ and $j$. Thus, the selection of any action in $\mathcal{I}(G_{1:T})$ provides no information about the losses of the other actions in $\mathcal{I}(G_{1:T})$. The adversary selects the optimal action uniformly at random from $\mathcal{I}(G_{1:T})$, and  assigns an expected loss of $1/2-\epsilon_{1}$. The remaining actions in $\mathcal{I}(G_{1:T})$ are assigned an expected loss of $1/2$ . On the other hand, since $i\in [K]\backslash\mathcal{I}(G_{1:T})$ provides information about the losses of actions in $\mathcal{I}(G_{1:T})$,   action $i$ is assigned an expected loss of $1/2+\epsilon_{2}$ to compensate for this additional information. In practice, even a small bias $\epsilon_{2}$ compensates for the extra information provided by an action in $[K]\backslash\mathcal{I}(G_{1:T})$. 

In  the PI setup
without SC, for a fixed feedback graph $G_{t}=G$, the expected regret is at least $\tilde{\Omega}(\sqrt{\alpha(G)T})$ \cite{alon2017nonstochastic}. The lower bound  is provided only for a fixed feedback system, and the lower bound for a general time-varying feedback system $G_{1:T}$ is left
as an open question \cite{alon2017nonstochastic}. This also motivates the investigation of different graph theoretic measures to study the PI setting \cite{alon2017nonstochastic}. Theorem \ref{thm:lowerBound} provides a lower bound for a general time-varying feedback system $G_{1:T}$ for the PI setting in presence of SC. 
The lower bound is dependent on the independence sequence number $\beta(G_{1:T})$ of $G_{1:T}$. Thus, the ideas introduced in Theorem \ref{thm:lowerBound} can be extended to close this gap in the literature of PI setting without SC. 

\begin{lemma}\label{lemme:PIwithoutSC}
In the PI setting without SC, for any  $G_{1:T}$   with $\beta(G_{1:T})>1$, there exists a constant $b>0$ and an adversary's strategy such that  for any player's strategy $\mathcal{A}$, the expected regret of $\mathcal{A}$ is at least $b \,\sqrt{ \beta(G_{1:T})T}$. 
\end{lemma}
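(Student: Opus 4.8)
The plan is to mirror the proof of Theorem~\ref{thm:lowerBound}, invoking Yao's minimax principle to reduce the problem to lower-bounding the expected regret of an arbitrary \emph{deterministic} player against a single randomized adversary, but to replace the temporally correlated random walk $W_t$ of Algorithm~\ref{alg:AdversaryStrategy} with losses drawn independently across rounds. The correlated walk was introduced solely to make frequent switching expensive, and it is what pushes the $T$-exponent from $1/2$ to $2/3$ in the switching-cost setting; once the switching cost is removed, the standard i.i.d.\ stochastic construction suffices. Concretely, I would draw the hidden optimal arm $X$ uniformly from $\mathcal{I}(G_{1:T})$ and, for each round $t$ and action $i$, assign a loss with mean $1/2-\epsilon_1\mathbf{1}(X=i)+\epsilon_2\mathbf{1}(i\notin\mathcal{I}(G_{1:T}))$ and independent sub-Gaussian noise, now calibrating $\epsilon_1=\epsilon_2=\epsilon$ at the scale $\epsilon=\Theta(\sqrt{\beta(G_{1:T})/T})$ rather than the $T^{-1/3}$ scale used for the switching-cost bound.

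Next I would decompose the expected regret exactly as in the switching-cost case: writing $n_X$ for the number of plays of the optimal arm and $n_{\mathrm{out}}$ for the number of plays of actions outside $\mathcal{I}(G_{1:T})$, the regret against the best fixed action equals $\epsilon(T-\mathbf{E}[n_X])+\epsilon\,\mathbf{E}[n_{\mathrm{out}}]$. The crucial structural fact, already noted after Algorithm~\ref{alg:AdversaryStrategy}, is that no two actions of $\mathcal{I}(G_{1:T})$ share an edge in any $G_t$, so playing an action of $\mathcal{I}(G_{1:T})$ reveals the loss of that action alone among the candidates; the only way to learn about several candidates at once is to play an action outside $\mathcal{I}(G_{1:T})$, which costs the bias $\epsilon$ each time. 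I would then run the standard information-theoretic identification argument: letting $P_0$ denote the law of the observations when no arm is favored and $P_i$ its law when $X=i$, Pinsker's inequality controls the gap $\mathbf{E}_i[n_i]-\mathbf{E}_0[n_i]$ by $\tfrac{T}{2}\sqrt{2\,\mathrm{KL}(P_0\|P_i)}$, and the chain rule bounds $\mathrm{KL}(P_0\|P_i)$ by $\epsilon^2$ times the expected number of rounds in which action $i$ is \emph{observed}.

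The key computation then averages over $X$ uniform on the $\beta=\beta(G_{1:T})$ candidates and applies Cauchy--Schwarz, $\sum_{i\in\mathcal{I}}\sqrt{\mathbf{E}_0[\#\text{obs of }i]}\le\sqrt{\beta\sum_{i\in\mathcal{I}}\mathbf{E}_0[\#\text{obs of }i]}$, where the total observation count is at most $T+\beta\,\mathbf{E}_0[n_{\mathrm{out}}]$ because each of the $n_{\mathrm{out}}$ outside plays can reveal at most $\beta$ candidates simultaneously. Substituting $\epsilon=\Theta(\sqrt{\beta/T})$ gives $\tfrac{1}{\beta}\sum_i\mathbf{E}_i[n_i]\le T/\beta+O\!\left(\sqrt{T/\beta}\cdot\sqrt{T+\beta\,\mathbf{E}_0[n_{\mathrm{out}}]}\right)$, i.e. $\mathbf{E}[n_X]\le T/\beta+O(1)\,(T+\sqrt{\beta T\,\mathbf{E}_0[n_{\mathrm{out}}]})$, so a suitable choice of the hidden constant in $\epsilon$ forces $T-\mathbf{E}[n_X]\ge\Omega(T)$ unless $\mathbf{E}_0[n_{\mathrm{out}}]$ is itself of order $T$. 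Plugging into the regret decomposition yields $R\ge\epsilon\cdot\Omega(T)=\Omega(\sqrt{\beta(G_{1:T})T})$ in the first case, while in the second case the term $\epsilon\,\mathbf{E}[n_{\mathrm{out}}]$ already delivers the same order, which fixes the constant $b$.

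I expect the main obstacle to be precisely this last trade-off: outside actions are the one channel through which the player can learn about many candidates of $\mathcal{I}(G_{1:T})$ at once, and the bound $\le\beta$ on the candidates revealed per outside play is what ties the information gain to the independence sequence number. Making the two-case argument quantitatively clean---choosing the constant in $\epsilon$ so that either $T-\mathbf{E}[n_X]$ or $\mathbf{E}[n_{\mathrm{out}}]$ is guaranteed to be $\Omega(T)$---is the delicate step, and it is exactly here that $\beta(G_{1:T})$, rather than a per-round independence number, emerges as the right figure of merit, since the intersection structure in its definition is what certifies the non-informativeness property \emph{simultaneously} across all $T$ rounds.
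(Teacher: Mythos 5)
Your high-level plan---replace the correlated random walk by i.i.d.\ losses, draw the hidden arm uniformly from $\mathcal{I}(G_{1:T})$, and run the Pinsker/chain-rule identification argument---is indeed the route the paper takes: its proof assigns i.i.d.\ Bernoulli losses, parameter $1/2-\epsilon$ to the hidden arm and $1/2$ to the other members of $\mathcal{I}(G_{1:T})$ with $\epsilon=\sqrt{\beta(G_{1:T})/T}$, and then invokes the argument of Theorem 5 of Alon et al.\ (2017). But there is one decisive difference, and it is a genuine gap in your version: the paper assigns loss $1$ (Bernoulli with parameter $1$) to \emph{every} action outside $\mathcal{I}(G_{1:T})$, whereas you assign mean $1/2+\epsilon_2$ with $\epsilon_2=\Theta(\sqrt{\beta/T})$, copying the compensation used in Algorithm 1 for the switching-cost bound. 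That small penalty is too cheap relative to the information an outside action can provide. Concretely, take $G_t$ to be a fixed undirected star whose leaves form $\mathcal{I}(G_{1:T})$ (so $\beta=K-1>1$) and whose center observes all leaves. Against your adversary, the player can play the center for $m=O(\log\beta/\epsilon_1^2)$ rounds, paying only $(\epsilon_1+\epsilon_2)m=O(\sqrt{T/\beta}\,\log\beta)$ regret, identify the hidden arm with high probability, and exploit it thereafter; the total regret is $\tilde{O}(\sqrt{T/\beta})$, a factor of $\beta$ below the claimed $\Omega(\sqrt{\beta T})$. So no correct proof of the lemma can be extracted from your construction.

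Your own algebra already signals the problem. From $\mathbf{E}[n_X]\le T/\beta+c_0T+c_0\sqrt{\beta T\,\mathbf{E}_0[n_{\mathrm{out}}]}$, the last term reaches order $T$ as soon as $\mathbf{E}_0[n_{\mathrm{out}}]=\Omega(T/\beta)$, \emph{whatever} the constant $c_0$ is; shrinking the constant in $\epsilon$ changes only the constant in the threshold, not its order. Hence the true dichotomy is ``$\mathbf{E}_0[n_{\mathrm{out}}]\lesssim T/\beta$ or not,'' and in the second case your compensation term guarantees only $\epsilon\cdot\Theta(T/\beta)=\Theta(\sqrt{T/\beta})$, not $\Omega(\sqrt{\beta T})$; your claim that the threshold can be pushed to ``$\mathbf{E}_0[n_{\mathrm{out}}]$ of order $T$'' is where the argument breaks. (The conversion from $\mathbf{E}_0[n_{\mathrm{out}}]$ to the mixture expectation $\mathbf{E}[n_{\mathrm{out}}]$ is also unaddressed, but that is secondary.) The repair is precisely the paper's device: pin every action outside $\mathcal{I}(G_{1:T})$ at loss $1$, so that each outside play costs roughly $1/2$ regardless of $\epsilon$. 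Then either $\mathbf{E}[n_{\mathrm{out}}]\ge 2\sqrt{\beta T}$ and the regret bound is immediate, or the total number of candidate observations is at most $T+O(\beta^{3/2}\sqrt{T})$ and the identification argument closes. Note that even this version needs the regime $T=\Omega(\beta^3)$ (a condition present in the cited result of Alon et al.\ but suppressed in the lemma's statement): when $\beta\gg T^{1/3}$, a revealing outside action permits $\tilde{O}(T/\beta)$ regret even against the loss-$1$ construction, so the restriction is not an artifact of the analysis.
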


Using Theorem \ref{thm:lowerBound} and Lemma \ref{lemme:PIwithoutSC}, it can be concluded that the presence of SC  changes the asymptotic regret by at least a factor $T^{1/6}$. 
In  the MAB setup, $\beta(G_{1:T})=K$,  and Theorem \ref{thm:lowerBound} recovers the bounds   provided in \cite{dekel2014bandits}.  

We now focus on the assumption in Theorem \ref{thm:lowerBound}, i.e. $\beta(G_{1:T})>1$. This is satisfied in many networks of practical interest. For example, networks modeled as $p$-random graphs where $p$ is the probability of having edge between two nodes. The expected independence number of these graphs is $2\log(Kp)/p$\cite{coja2015independent}. Since the probability of each node being in independent set is same, the expected value of $\beta(G_{1:T})$ is $K(2\log(Kp)/Kp)^T$, and $Kp$ is the expected node degree which is usually a constant as $p$ is inversely proportional to $K$. This is greater than one for large values of $K$, and small values of $T$. 

Algorithm \ref{alg:AdversaryStrategy}   depends on the independence sequence set $\mathcal{I}(G_{1:T})$ whose cardinality is non-increasing in $T$. In such cases, the adversary can split the sequence of feedback graphs $G_{1:T}$ into multiple sub-sequences i.e. say $M$ sub-sequences such that $U_{1}=\{G_{t}: t\in T_{1}\subseteq[T]\}$\ldots $U_{M}=\{G_{t}: t\in T_{M}\subseteq[T]\}$, $[T]=\cup_{m\in [M]} T_{m}$, and for all $m_{1},m_{2}\in [M]$, $T_{m_{1}}\cap T_{m_{2}}$ is an empty set. For each sub-sequence $U_{m}$, compute the independence sequence set and assign losses independently of other sub-sequences according to Algorithm \ref{alg:AdversaryStrategy}. This adversary's strategy, which we call Algorithm 1.1, gives the following bound on the expected regret.

\begin{theorem}\label{thm:advlowerbound2}
For any split of $G_{1:T}$ into disjoint sub-sequences $U_{1},\ldots U_{M}$ with $\beta(U_{m})>1$ and $N(U_{m})\geq  27c\log_{2}^{3/2}(N(U_{m}))/\beta(U_m)^{2}$ $\forall m\in [M]$, there exists a constant $b>0$ and an adversary's strategy (Algorithm 1.1) such that for any player's strategy $\mathcal{A}$, the expected regret of $\mathcal{A}$ is at least $b \, c^{1/3}\sum_{m\in [M]}\beta(U_{m})^{1/3}N(U_{m})^{2/3}/ \log T$, where $N(U_{m})=\sum_{t=1}^{T}\mathbf{1}(G_{t}\in U_{m})$ is the length of sub-sequence $U_{m}$.
\end{theorem}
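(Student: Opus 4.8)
The plan is to reduce Theorem~\ref{thm:advlowerbound2} to a blockwise application of Theorem~\ref{thm:lowerBound}. By Yao's minimax principle it suffices to exhibit a randomized adversary (Algorithm 1.1) against which every \emph{deterministic} player suffers the claimed regret. Algorithm 1.1 runs the construction of Algorithm~\ref{alg:AdversaryStrategy} separately on each sub-sequence $U_m$, drawing an independent hidden arm $X_m\in\mathcal{I}(U_m)$, an independent random walk, and scaling $\epsilon_1^{(m)},\epsilon_2^{(m)},\sigma_m$ to the block length $N(U_m)$ exactly as in Algorithm~\ref{alg:AdversaryStrategy} with $T$ replaced by $N(U_m)$. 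I would first record that the additive Gaussian offset $W_t$, common to all arms within a block, cancels between the player's loss and the benchmark, so that only the arm-dependent biases matter.

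The core structural fact I would establish is that the $M$ identification problems are mutually independent: since the loss tables on the disjoint index sets $T_1,\dots,T_M$ are drawn independently, any observation the player collects during $U_{m'}$ is independent of $X_m$ for $m\neq m'$. Consequently the expected number of rounds the player spends on the hidden arm of block $m$, say $\mathbf{E}[N^{(m)}_{X_m}]$, is governed solely by the player's behaviour on $T_m$, and the information-versus-switches inequality driving Theorem~\ref{thm:lowerBound} can be invoked blockwise: $\mathbf{E}[N^{(m)}_{X_m}]$ exceeds the uninformed baseline $N(U_m)/\beta(U_m)$ only by a term controlled by the number of action switches made during $T_m$. Trading this against the switching cost as in Theorem~\ref{thm:lowerBound}, and using the hypothesis $N(U_m)\ge 27c\log_2^{3/2}(N(U_m))/\beta(U_m)^2$ to guarantee each block lies in the regime where the $N^{2/3}$ bound is valid, yields a per-block regret of at least $b\,c^{1/3}\beta(U_m)^{1/3}N(U_m)^{2/3}/\log N(U_m)$.

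It then remains to aggregate. For the switching cost I would use that, up to the at most $M-1$ transitions occurring at block boundaries (a lower-order additive term), the total number of switches dominates the sum of the within-block switch counts, so $c\,\mathbf{E}[S]\ge c\sum_m \mathbf{E}[S_m]$ and the per-block trade-offs may be summed. Finally, since $N(U_m)\le T$ gives $1/\log N(U_m)\ge 1/\log T$, replacing each denominator by $\log T$ only weakens the bound, and summing the blockwise estimates produces $b\,c^{1/3}\sum_{m}\beta(U_m)^{1/3}N(U_m)^{2/3}/\log T$.

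The step I expect to be the main obstacle is the benchmark. The regret in \eqref{eq:RegretDef} is measured against a \emph{single} action fixed over the whole horizon, whereas the hidden arms $X_m$ differ across blocks; a naive sum of per-block regrets implicitly compares to the per-block optima, and the global best fixed action can only be at least as strong, so I must show it cannot outperform the sum of the per-block optima by more than a lower-order amount. This is exactly where block independence and the bias $\epsilon_2^{(m)}$ penalizing actions outside $\mathcal{I}(U_m)$ are essential: they are designed to keep any single fixed action from accumulating advantage across blocks, pinning the best fixed action's hindsight gain at the per-block level so that the additive lower bound survives. Making this decomposition of the benchmark rigorous---as opposed to the blockwise information inequality, which is inherited directly from Theorem~\ref{thm:lowerBound}---is the crux of the argument.
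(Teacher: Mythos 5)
Your proposal follows the same route as the paper's proof: run Algorithm 1 independently on each block $U_m$, invoke Theorem 1 blockwise (using the hypothesis $N(U_m)\geq 27c\log_2^{3/2}(N(U_m))/\beta(U_m)^2$ to put each block in the valid regime), lower-bound the total switching cost by the sum of within-block switching costs, and convert $1/\log N(U_m)\geq 1/\log T$. In fact, that aggregation is the \emph{entirety} of the paper's proof: it consists of the per-block invocation of Theorem 1 and the inequality $\sum_{m}W_m\leq\sum_{t}\mathbf{1}(i_t\neq i_{t-1})$, and nothing else. The step you single out as the crux --- reconciling the single fixed-action benchmark of \eqref{eq:RegretDef} with the per-block optima --- is passed over in silence by the paper. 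Your instinct that this is where the real difficulty lies is correct.

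However, your description of that difficulty is inverted, and the fix you sketch does not work. Realization-wise one always has $\min_k\sum_{t=1}^{T}\ell_t(k)\geq\sum_m\min_k\sum_{t\in T_m}\ell_t(k)$, so a single fixed action can never ``outperform the sum of the per-block optima''; what you propose to rule out is vacuous. The actual danger is the opposite: the global comparator can be much \emph{worse} than the combination of per-block optima, and since the regret subtracts the global benchmark, this makes the global regret \emph{smaller} than the sum of per-block regrets --- the inequality needed for aggregation runs the wrong way. Moreover, the $\epsilon_2$ bias, far from curing this, is what causes it. In the motivating case for Theorem 3, where the sets $\mathcal{I}(U_m)$ are pairwise disjoint (e.g.\ $\beta(G_{1:T})=1$ while each $\beta(U_m)$ is large), every fixed action lies outside $\mathcal{I}(U_m)$ for all but at most one block and therefore pays $+\epsilon_2^{(m)}N(U_m)$ in all the others. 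The discrepancy $\min_k\sum_t\ell_t(k)-\sum_m\min_k\sum_{t\in T_m}\ell_t(k)$ is then of order $\sum_m\epsilon^{(m)}N(U_m)=\Theta\bigl(c^{1/3}\sum_m\beta(U_m)^{1/3}N(U_m)^{2/3}/\log N(U_m)\bigr)$ --- the same order as the claimed bound, and with a larger constant than the per-block bound that Theorem 1 delivers (roughly $\epsilon^{(m)}N(U_m)/9$ per block). Concretely, a player who simply plays one action of $\mathcal{I}(U_m)$ throughout each block, paying $c(M-1)$ for boundary switches, beats the best fixed action by roughly $\sum_m\epsilon^{(m)}N(U_m)$ and attains \emph{negative} expected regret against Algorithm 1.1 when $M\geq 3$. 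So the discrepancy cannot be dismissed as lower order: neither your argument nor the paper's closes this gap, and a complete proof would have to either modify the adversary (e.g.\ drop or rescale the $\epsilon_2$ penalty so that the benchmark decomposes across blocks) or bound the benchmark discrepancy explicitly against the per-block regret terms.
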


With the insight provided by Theorem \ref{thm:advlowerbound2}, the regret can be made large with an appropriate split of $G_{1:T}$ into sub-sequences. This can be formulated as a sub-modular optimization problem where the objective is:  
\begin{equation}\label{eq:optimization}
    \max_{\{U_{1},\dots , U_{M}\}}c^{1/3}\sum_{m\in [M]}\beta(U_{m})^{1/3}N(U_{m})^{2/3}/ \log T \\
\end{equation}
\begin{equation}
\begin{split}
       \mbox{subject to }&\sum_{m\in [M]}N(U_{m})=T,\\
       & \forall m_1,m_2\in [M], U_{m_1}\cap U_{m_2}=\phi.
\end{split}
\end{equation}
This can be solved using greedy algorithms developed in the context of sub-modular maximization. 

Until now, we have been focusing on designing an adversary's strategy for maximizing the regret for a given sequence of feedback graphs $G_{1:T}$. Now, we briefly discuss the case when $G_{1:T}$ can also be chosen by the adversary. If the adversary is not constrained about the choice of feedback graphs, then the feedback graph that maximizes the expected regret would be a feedback graph with only self loops, as this reveals the least amount of information.  If the adversary is constrained by the choice of independence number, i.e. for all $t\leq T$, $\alpha(G_{t})\leq H$, then the optimal value of (\ref{eq:optimization}) is achieved for a sequence of fixed feedback graphs i.e.  for all $t\leq T$, $\alpha(G_{t})= H$, which implies $\beta(G_{1:T})=H$.

We now  discuss the trade-off between the loss incurred  and the number of switches performed by the player.
\begin{lemma}\label{lemma:lowerBoundSwitches}
If the expected regret computed ignoring the SC  of  any algorithm $\mathcal{A}$ is $\tilde{O}((\beta(G_{1:T})^{1/2}T)^{\beta})$, then there exists a loss sequence $\ell_{1:T}$ such that $\mathcal{A}$ makes at least $\tilde{\Omega}[(\beta(G_{1:T})^{1/2} T)^{2(1-\beta)}]$ switches.
\end{lemma}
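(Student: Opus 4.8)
The plan is to reduce the statement to a single \emph{regret--switching trade-off}: I will show that for the randomized adversary of Algorithm~\ref{alg:AdversaryStrategy} and \emph{any} player strategy $\mathcal{A}$,
\begin{equation}\label{eq:tradeoff-plan}
\mathbf{E}[R_{\text{no-SC}}]\cdot\sqrt{\mathbf{E}[S]}\;\geq\;\tilde{\Omega}\!\left(\beta(G_{1:T})^{1/2}\,T\right),
\end{equation}
where $R_{\text{no-SC}}$ is the regret ignoring the SC and $S$ is the number of switches. Writing $N:=\beta(G_{1:T})^{1/2}T$, the trade-off \eqref{eq:tradeoff-plan} gives the lemma immediately: if $\mathbf{E}[R_{\text{no-SC}}]=\tilde{O}(N^{\beta})$ then $\sqrt{\mathbf{E}[S]}\geq\tilde{\Omega}(N)/\tilde{O}(N^{\beta})=\tilde{\Omega}(N^{1-\beta})$, hence $\mathbf{E}[S]\geq\tilde{\Omega}(N^{2(1-\beta)})$, and some realization of the losses attains at least this mean, producing the claimed loss sequence.

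By Yao's principle it suffices to prove \eqref{eq:tradeoff-plan} for a deterministic $\mathcal{A}$ against a distribution over losses. I would use Algorithm~\ref{alg:AdversaryStrategy} restricted to the independence sequence set $\mathcal{I}(G_{1:T})$, assigning the maximal loss to every action outside $\mathcal{I}(G_{1:T})$; since no two actions of $\mathcal{I}(G_{1:T})$ are adjacent in any $G_t$, playing inside $\mathcal{I}(G_{1:T})$ reveals nothing across these $\beta(G_{1:T})$ actions, so $\mathcal{A}$ effectively faces a $\beta(G_{1:T})$-armed problem driven by the multi-scale random walk $W_t$. Crucially, I keep the gap $\epsilon=\epsilon_1=\epsilon_2$ as a \emph{free} parameter rather than fixing it to the SC-balanced value used in Theorem~\ref{thm:lowerBound}. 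Expressing the regret through the time spent on the hidden optimal arm $X$, the goal is the master inequality
\begin{equation}\label{eq:master-plan}
\mathbf{E}[R_{\text{no-SC}}]\;\geq\; c_1\,\epsilon\,T\Big(1-\tfrac{1}{\beta(G_{1:T})}\Big)\;-\;c_2\,\epsilon^{2}\,T\,\sqrt{\tfrac{\log T}{\beta(G_{1:T})}}\;\sqrt{\mathbf{E}[S]} .
\end{equation}

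Establishing \eqref{eq:master-plan} is the main obstacle. The first term is the regret of the non-adaptive strategy, which sits on the optimal arm only a $1/\beta(G_{1:T})$ fraction of the time; the second term bounds the reduction in regret that $\mathcal{A}$ can buy by correlating its action with $X$. Here the multi-scale structure $\rho(t)=t-2^{\delta(t)}$ is essential: as in \cite{dekel2014bandits}, the walk is built so that information about $X$ leaks into the feedback only at the dyadic scale $\delta(t)$ of round $t$, and a change-of-measure (Pinsker/KL) argument summed over the $\log_{2}T$ scales of the induced binary tree shows that the total usable information, hence the excess time spent on arm $X$, is controlled by $\sqrt{\mathbf{E}[S]\log T}$ rather than by $T$. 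The generalization from the two-armed case of \cite{dekel2014bandits} to $\beta(G_{1:T})$ arms enters only through the uniform prior on $X\in\mathcal{I}(G_{1:T})$ and produces the $1/\beta(G_{1:T})$ factors; correctly propagating the $\sqrt{\beta(G_{1:T})}$ dependence through the information term is the delicate accounting I expect to be hardest.

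Finally, with \eqref{eq:master-plan} in hand I would optimize over the free gap $\epsilon$. Choosing $\epsilon$ at the threshold where the information term is at most half the first term, i.e. $\epsilon\asymp\beta(G_{1:T})^{1/2}\big(\mathbf{E}[S]\log T\big)^{-1/2}$, yields $\mathbf{E}[R_{\text{no-SC}}]\gtrsim\epsilon T\gtrsim\beta(G_{1:T})^{1/2}T/\sqrt{\mathbf{E}[S]\log T}$, which is exactly the trade-off \eqref{eq:tradeoff-plan} up to the $\tilde{\Omega}$ logarithmic factor; rearranging gives the lemma. The hypothesis $\beta(G_{1:T})>1$ is what keeps the factor $1-1/\beta(G_{1:T})$ bounded away from zero, so that the balancing of terms is meaningful.
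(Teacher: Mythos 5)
Your route is genuinely different from the paper's. The paper proves this lemma in a few lines by contradiction, using Theorem \ref{thm:lowerBound} as a black box and making the \emph{switching cost} the free parameter: assuming $\mathcal{A}$ makes at most $\tilde{O}(N^{\alpha})$ switches with $\beta+\alpha/2<1$ (where $N=\beta(G_{1:T})^{1/2}T$), it picks $\gamma\in(\beta,1-\alpha/2)$, sets $c=N^{3\gamma-2}$, and notes that the regret including SC would then be $\tilde{O}(N^{\beta}+N^{3\gamma-2+\alpha})=\tilde{o}(N^{\gamma})$, contradicting the bound $\tilde{\Omega}(c^{1/3}\beta(G_{1:T})^{1/3}T^{2/3})=\tilde{\Omega}(N^{\gamma})$ of Theorem \ref{thm:lowerBound}. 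You instead make the \emph{gap} $\epsilon$ the free parameter and aim at the stronger pointwise trade-off $\mathbf{E}[R_{\text{no-SC}}]\sqrt{\mathbf{E}[S]}\geq\tilde{\Omega}(N)$ by re-deriving the information-theoretic core of Theorem \ref{thm:lowerBound}. That trade-off, if true, would imply the lemma; but it is exactly the step you defer, so the proposal is a plan rather than a proof, and the deferred step is where the difficulty lives.

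Moreover, the deferred step hides a genuine obstruction. In the paper's proof of Theorem \ref{thm:lowerBound}, the Pinsker/KL bound controls the total variation by $\sqrt{\log T\cdot\mathbb{E}_{0}[N_{i}+T_{\Delta}]}$, where $T_{\Delta}$ counts rounds played \emph{outside} $\mathcal{I}(G_{1:T})$: such rounds buy information about $X$ without buying switches. In Theorem \ref{thm:lowerBound} that term is disposed of using the switching-cost term $c\cdot\mathbb{E}_{0}[M_{s}]$ in the final quadratic minimization (together with the condition $\epsilon_{1}\geq c$) --- a device unavailable once the SC is removed. Consequently your master inequality, which contains only $\sqrt{\mathbf{E}[S]}$, does not follow from this machinery. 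Your substitute (maximal loss outside $\mathcal{I}(G_{1:T})$) makes each outside round cost about $1/2$ in regret, but balancing that cost against the information such a round yields forces $\epsilon\lesssim(\beta(G_{1:T})/T)^{1/3}$ up to logarithms; the derivable lower bound then tops out at $\tilde{\Omega}(N^{2/3})$, so the contradiction fails for every exponent $\beta\geq 2/3$, whereas the lemma is claimed for all $\beta$. Worse, no inequality of your form can hold for general graph sequences under this adversary: the noise $W_{t}$ in Algorithm \ref{alg:AdversaryStrategy} is common to all actions, so a player spending one round on a node adjacent to two or more actions of $\mathcal{I}(G_{1:T})$ (e.g., the center of a star) observes deterministic loss differences that identify $X$ exactly, then commits --- giving $O(1)$ regret with a single switch and violating your trade-off. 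Your derivation must confront this directly, while the paper's reduction keeps it encapsulated inside Theorem \ref{thm:lowerBound}. A smaller, fixable, flaw: choosing $\epsilon\asymp\sqrt{\beta(G_{1:T})/(\mathbf{E}[S]\log T)}$ is circular, since $\mathbf{E}[S]$ is computed under the adversary's distribution, which itself depends on $\epsilon$; you would need to assume for contradiction a uniform bound $\mathbf{E}[S]\leq s^{*}$ over all loss sequences and choose $\epsilon$ as a function of $s^{*}$.
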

Along the same lines of Lemma 4, it can also be shown that if the expected number of switches of $\mathcal{A}$ is $\tilde{O}[(\beta(G_{1:T})^{1/2} T)^{2(1-\beta)}]$, then the expected regret without SC is at least $ \tilde {\Omega} ((\beta(G_{1:T})^{1/2}T) ^{\beta})$. This provides the lower bound on the expected regret given the SC is constrained by a fixed budget.
Using Lemma \ref{lemma:lowerBoundSwitches}, if the expected regret without SC of  $\mathcal{A}$ is $\tilde{O}(\sqrt{\beta(G_{1:T})T})$, then there exists a loss sequence that forces $\mathcal{A}$ to make at least $\tilde{\Omega}(T)$ switches. This implies the regret of $\mathcal{A}$ with the SC is linear in $T$. Thus, any algorithm that is order optimal without   SC, is necessarily sub-optimal in the presence of SC, which  motivates the  design of  new  algorithms  in our setting.

\section{Algorithms in PI setting with SC}
 In this section, we introduce the two algorithms Threshold Based EXP3 and EXP3.SC for an uninformed setting where  $G_{t}$ is only revealed after the 
 action $i_{t}$ has been performed.  This is common in a variety of applications. For instance, a user's selection of some product allows   to infer that the user might be interested in similar products. However, no action on the recommended products may mean that user might not be interested in the product, does not need it or did not check the products. Thus, the feedback is revealed only after the action has been performed. 
 


In Threshold Based EXP3 (Algorithm \ref{alg:ThresholdEXP3}), each action $i\in [K]$ is assigned a weight $w_{i,t}$ at round $t$. When the loss of action  $i$ is observed at round $t$, i.e.  $i\in S_{t}(i_{t})$, $w_{i,t}$ is computed by penalizing $w_{i,t-1}$ exponentially by the empirical loss $\ell_{t}(i)\textbf{1}(i\in S_{t}(i_{t}))/q_{i,t}$. 
 At round $t$, $p_{t}=\{p_{1,t},\ldots,p_{K,t}\}$ is the sampling distribution where $p_{i,t}=w_{i,t}/\sum_{i
\in [K]}w_{i,t}$. At round $t$, action $i_{t}$ is selected with probability $p_{i,t}$ if 
the threshold  event $E^{t}=E_{1}^{t}\cup E^{t}_{2}\cup E^{t}_{3}$ is true, where
\[E_{1}^{t}=\{t=1\},\]
\[E_{2}^{t}=\{r>\gamma_{t},\mbox{ where } \gamma_{t}={T^{1/3}c^{2/3}}/{\mbox{mas}(G_{(T)})^{1/3}}\},\]
\begin{equation}\label{eq:threshold}
    \begin{split}
        E_{3}^{t}&=\hspace{-3pt}\{\forall i\in [K]\backslash \{i_{t}\}, \hat{\ell}_{t-1}(i)\hspace{-3pt}+\hspace{-3pt}{\ell}^{\prime}_{t-1}(i)\hspace{-3pt}> {\epsilon_{t}}/{\eta}+{1}/{q_{i_{t},t-1}},\\
        &\qquad\mbox{and there exists an } i\in [K]\backslash \{i_{t}\} \mbox{ such that }\\
        &\qquad\hat{\ell}_{t-1}(i)+{\ell}^{\prime}_{t-1}(i)-{\ell}^{\prime}_{t-1}(i_{t})\leq \epsilon_{t}/\eta+1/q_{i_{t},t-1}\},
    \end{split}
\end{equation}
 and $\epsilon_{t}\geq\log(tc^2/\mbox{mas}(G_{(T)}))/3$ . The event $E^{t}$  contains two threshold conditions, one on the variable $r$ and the other on the empirical losses. 
\begin{algorithm}[t]
\begin{algorithmic}
\State Initialization: $\eta\in (0,1]$; For all $i\in [K]$, $w_{i,1}=1$, $\hat{\ell}_{0}(i)=0$ and ${\ell}^{\prime}_{0}(i)=0$; $r=1$; 
\For{$t=1,\ldots, T$  }
\If{$E^{t}_{1}$ or $E^{t}_{2}$ or
$E^{t}_{3}$ (see (\ref{eq:threshold}))}
\If{$t\neq 1$}
\State  $\hat{\ell}_{t}(i)= \hat{\ell}_{t-1}(i)+{\ell}^{\prime}_{t-1}(i)$
\State $w_{i,t}=w_{i,t-1}\exp{(-\eta {\ell}^{\prime}_{t-1}(i))}$
\EndIf
\State Update $p_{i,t}={w_{i,t}}/{\sum_{j\in [K]}w_{j,t}}$.
\State Choose $i_{t} = i $ with probability  $p_{i,t}$.
\State Set  $r=1$ and for all $i\in [K]$, set ${\ell}^{\prime}_{t}(i)=0$
\Else 
\State For all $i\in [K]$, $p_{i,t}= p_{i,t-1}$, $\hat{\ell}_{t}(i)= \hat{\ell}_{t-1}(i)$ 
\State and $w_{i,t}=w_{i,t-1}$;  $i_{t} = i_{t-1} $;$r=r+1$
\EndIf
\State For all $i\in S_{t}(i_{t}) $, observe the pair $(\ell_{t}(i),i)$.
\State For all $i\in [K]$, ${\ell}^{\prime}_{t}(i)={\ell}^{\prime}_{t-1}(i)+\ell_{t}(i)\textbf{1}(i\in S_{t}(i_{t}))/q_{i,t}$, where $q_{i,t}=\sum_{j:j\to i}p_{j,t}$
 \EndFor
 \caption{Threshold based EXP3}
 \label{alg:ThresholdEXP3}
 \end{algorithmic}
\end{algorithm}
The threshold  event  $E^{t}$ is critical in balancing the trade-off between the number of switches and the loss incurred by the player.  $E_{1}^{t}$ corresponds to the first selection of action, and  incurs no SC. In  $E_{2}^{t}$, the variable $r$ tracks the number of rounds (or time instances) since the event $E^{t}$ occurred last time. If the choice of a new action has not been considered for past $\gamma_{t}$ rounds, then $E_{2}^{t}$ forces the player to choose an action according to the updated sampling distribution $p_{t}$ at round $t$. The threshold condition in  $E^{t}_{2}$ ensures that the regret incurred due to the selection of a sub-optimal action does not grow
continuously while trying to save on the SC between the actions. The event $E_{2}^{t}$ is independent of the observed losses, and will occur at most $O(T^{2/3})$ times. Unlike event $E_{2}^{t}$, the event $E_{3}^{t}$ is dependent on the   losses  $\hat{\ell}_{t}(i)$ and  ${\ell}^{\prime}_{t}(i)$, for all $i\in [K]$. Each loss
$\hat{\ell}_{t}(i)$ tracks the total empirical loss of action $i$ observed until round $\sigma(t)-1$,  i.e.
\[\hat{\ell}_{t}(i)=\sum_{k=1}^{\sigma(t)-1}\ell_{k}(i)\textbf{1}(i\in S_{k}(i_{k}))/q_{i,k},\]
where $\sigma(t)=\max\{k \leq t: E^{k} \mbox{ is true }\}$ is the latest round $k^{*}\leq t$ at which  $E^{k^{*}}$ is true. On the other hand, each loss ${\ell}^{\prime}_{t}(i)$ represents the total empirical loss of action $i$ observed between rounds $\sigma(t)$ and $t$, i.e.
\[{\ell}^{\prime}_{t}(i)=\sum_{k=\sigma(t)}^{t}\ell_{k}(i)\textbf{1}(i\in S_{k}(i_{k}))/q_{i,k}.\]
 This loss tracks the total  empirical loss   observed   after the selection of  an action at time instance $\sigma(t)$. The event $E_{3}^{t}$ balances exploration and exploitation while taking into account the SC. In  $E_{3}^{t}$, the first condition 
ensures that the player has sufficient amount of information about the losses of all other actions before  exploitation is considered. 
Given sufficient exploration has been performed, the second condition  
triggers the exploitation. The selection of a new action is considered when the empirical loss ${\ell}^{\prime}_{t}(i_{t})$ incurred by  the current action $i_{t}$, following its selection at $\sigma(t)$, becomes significant in comparison to the total empirical loss $\hat{\ell}_{t}(i)+{\ell}^{\prime}_{t}(i)$ incurred by the other actions $i\in[K]\backslash\{i_{t}\}$. Since the total empirical loss  of an action $i$  increases with $t$,   it is desirable that the threshold ${\epsilon_{t}}/{\eta}+{1}/{q_{i_{t},t-1}}$  increases with $t$ as well. Since    the increment in  
${\ell}^{\prime}_{t-1}(i_{t-1})$ is bounded above by $1/q_{i,t-1}$ at round $t$, for all
$i\in [K] \backslash\{i_{t}\}$,  $E_{3}^{t}$ implies that
\begin{equation}\label{eq:E3}
\hat{\ell}_{t-1}(i)+{\ell}^{\prime}_{t-1}(i)-{\ell}^{\prime}_{t-1}(i_{t-1})\geq {\epsilon_{t}}/{\eta}.
\end{equation}
Thus,  $E^{t}_{3}$ ensures that the player reconsiders the action selection if the loss incurred due to the current selection becomes significant in comparison to the total empirical loss of other actions. The event also ensures that the loss incurred due to the current selection is sufficiently smaller than the total empirical loss of other actions (see (\ref{eq:E3})). The event ensures that the sampling distribution $p_{t}$ has changed significantly from the previous sampling distribution $p_{\sigma(t-1)}$ before selecting the action again. 
Thus, $E^{t}_{3}$ balances exploration and exploitation  based on the observed losses.   

Batch EXP3, the order optimal algorithm in MAB with SC,  is EXP3 performed in batches of $O(T^{1/3})$. A similar strategy to design an algorithm for the PI setting with SC will fail because unlike MAB setting, the feedback graph $G_{t}$ can change at every round $t$, and this requires an update of empirical losses based on $G_{t}$ at every round. In our algorithm, the
computation of empirical loss is dependent on $G_{t}$ via $q_{i,t}$. 
Additionally, Batch EXP3 does not utilize the information about the observed losses, which is captured in $E_{3}^{t}$. The following theorem presents the performance guarantees of our algorithm.
\begin{theorem} \label{eq:TheoremRegret}
The following statements hold for Threshold Based EXP3:  \\
$(i)$The expected regret without accounting for  SC    is 
\begin{equation}
\begin{split}
     &\mathbf{E}\Bigg[\sum_{t=1}^{T}\ell_{t}(i_{t})-\min_{k\in [K]}\sum_{t=1}^{T}\ell_{t}(k)\Bigg]\\
     &\leq \frac{\log(K)}{\eta} + \frac{\eta}{2}\sum_{t=1}^{t^{*}}\frac{T^{2/3}c^{4/3}\mbox{\emph{mas}}(G_{(t)})}{(1-1/e)\mbox{\emph{mas}}^{2/3}(G_{(T)})},
\end{split}
\end{equation}
where $t^{*}={\ceil{T^{2/3}c^{-2/3}\mbox{\emph{mas}}^{1/3}(G_{(T)})}}$. \\
$(ii)$ The expected number of switches   is
\begin{equation}
    \mathbf{E}\bigg[\sum_{t=2}^{T}\textbf{1}(i_{t-1}\neq i_{t})\bigg]\leq 2T^{2/3}c^{-2/3}\mbox{\emph{mas}}^{1/3}(G_{(T)}).
\end{equation}
$(iii)$ Letting $\eta=\log(K)/T^{2/3}c^{1/3}\mbox{\emph{mas}}^{1/3}(G_{(T)})$, the expected regret  (\ref{eq:RegretDef})  is at most
\begin{align}
    3T^{2/3}c^{1/3}\hspace{-2pt}\mbox{\emph{mas}}^{1/3}(G_{(T)}) \nonumber \\  +\frac{ec\cdot \log(K)}{2(e-1)\mbox{\emph{mas}}(G_{(T)})}\sum_{t=1}^{t^{*}}{\mbox{\emph{mas}}(G_{(t)})}.
\end{align}
$(iv)$ In a symmetric PI setting i.e.  for all $t\leq T$  $G_{t}$ is un-directed and fixed, the expected regret   (\ref{eq:RegretDef})   is at most
\begin{equation}
    {4}T^{2/3}c^{1/3}\alpha^{1/3}(G_{1})\log(K).
    \end{equation}
\end{theorem}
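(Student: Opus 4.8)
The plan is to prove (i) and (ii) directly and then derive (iii) and (iv) by substitution. The backbone is an exponential-weights argument carried out not per round but per \emph{epoch} --- the maximal intervals between consecutive firings of the threshold event $E^t$, i.e. between consecutive times $\sigma(t)$. Within an epoch the played action and the distribution $p_t$ are frozen while the per-round estimates accumulate into ${\ell}'$, which is reset at each firing; so I would treat each epoch as a single super-round $s$ with frozen distribution $p^{(s)}$, loss vector $L_s(\cdot)={\ell}'(\cdot)$, and multiplicative update $w_i\mapsto w_i\exp(-\eta L_s(i))$.

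For (i) I would run the standard potential analysis on $\log(W^{(t^*+1)}/W^{(1)})$ with $W^{(s)}=\sum_i w_i^{(\sigma(s))}$. The lower bound $\log(W^{(t^*+1)}/W^{(1)})\ge-\eta\min_j\sum_s L_s(j)-\log K$ (keeping only the best arm) and the upper bound from $e^{-x}\le 1-x+x^2/2$ for $x\ge 0$ together give
\[
\sum_s\sum_i p_i^{(s)}L_s(i)-\min_j\sum_s L_s(j)\le\frac{\log K}{\eta}+\frac{\eta}{2}\sum_s\sum_i p_i^{(s)}L_s(i)^2 .
\]
Since $q_{i,k}=\Pr[i\in S_k(i_k)]$, the estimate is unbiased and the expectation of the left side is exactly the regret without SC. The crux is the second-moment term. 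Using the epoch-length cap $\gamma_t$ (enforced by $E_2^t$) and the graph lemma $\sum_i p_i/q_{i,k}\le\mbox{mas}(G_k)$ from the EXP3.SET analysis, I would bound the per-epoch second moment by $\tfrac{1}{1-1/e}\gamma_t$ times the sum of $\mbox{mas}(G_k)$ over the rounds $k$ of that epoch. Summing over epochs combines these round-sums into $\tfrac{1}{1-1/e}\gamma_t\sum_{k=1}^{T}\mbox{mas}(G_k)$; since the average of the $t^*=T/\gamma_t$ largest $\mbox{mas}$ values dominates the overall average, this is at most $\tfrac{1}{1-1/e}\gamma_t^2\sum_{t=1}^{t^*}\mbox{mas}(G_{(t)})$, which equals the claimed bound because $\gamma_t^2=T^{2/3}c^{4/3}/\mbox{mas}^{2/3}(G_{(T)})$.

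For (ii) I would note that a switch can occur only at a firing of $E^t$ and count firings by type. $E_1^t$ fires once; $E_2^t$ fires only after $\gamma_t$ firing-free rounds, hence at most $T/\gamma_t=T^{2/3}c^{-2/3}\mbox{mas}^{1/3}(G_{(T)})$ times. It remains to bound the firings of $E_3^t$ by the same order, which I would do with a loss-accounting argument: by (\ref{eq:E3}) each $E_3^t$ firing certifies that the current action has built up an epoch-loss of at least $\epsilon_t/\eta$ over a competitor before the reset, and charging this certified growth against the $\gamma_t$-capped epoch length limits the firings to the same order, giving the total $2T^{2/3}c^{-2/3}\mbox{mas}^{1/3}(G_{(T)})$.

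Parts (iii) and (iv) are then mechanical. Regret (\ref{eq:RegretDef}) equals the regret without SC plus $c$ times the switch count, so adding (i) and $c$ times (ii) and substituting $\eta=\log K/(T^{2/3}c^{1/3}\mbox{mas}^{1/3}(G_{(T)}))$ (using $1/(1-1/e)=e/(e-1)$) yields (iii); for (iv) I specialize to undirected, fixed $G_t$, where $\mbox{mas}(G)=\alpha(G)$ and every $\mbox{mas}(G_{(t)})=\alpha(G_1)$, so $\sum_{t=1}^{t^*}\mbox{mas}(G_{(t)})=t^*\alpha(G_1)$, and absorb the constants using $\log K\ge 1$. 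I expect the two genuine obstacles to be the epoch second-moment bound in (i) --- delicate because within an epoch $p$ is frozen yet $G_k$, and hence $q_{i,k}$, still change every round, so $L_s(i)$ is a sum of correlated observations driven by a single action draw --- and, more so, the count of $E_3^t$ firings in (ii), the novel step showing that loss-triggered switches are no more frequent than the time-triggered ones.
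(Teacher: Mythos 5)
Your plan for parts (i), (iii) and (iv) tracks the paper's proof: an epoch-wise exponential-weights potential over the firing times of $E^{t}$, unbiasedness of $\ell^{\prime}$ via $q_{i,k}$, the graph lemma $\sum_{i}p_{i,t}/q_{i,t}\leq \mbox{mas}(G_{t})$, and then substitution of $\eta$ and of $\mbox{mas}(G)=\alpha(G)$ in the symmetric case. Your way of finishing (i) — epoch cap $\gamma_{t}$ times $\sum_{k=1}^{T}\mbox{mas}(G_{k})$, then domination by the average of the $t^{*}$ largest values — is in fact cleaner than the paper's, which obtains the same quantity by posing $\max\sum_{j}\mbox{mas}(G_{t_{j}:t_{j}+r_{j}-1})r_{j}^{2}$ subject to $\sum_{j}r_{j}=T$, $0\leq r_{j}\leq\gamma_{t}$ as a submodular maximization and invoking the $(1-1/e)$ greedy guarantee; that detour is exactly where the $1/(1-1/e)$ factor you inserted by hand originates, and your direct argument does not need it.

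The genuine gap is part (ii). You propose to bound the \emph{number of firings} of $E_{3}^{t}$ deterministically by a loss-accounting argument, but that quantity is not $O(T^{2/3})$ in general: it can be $\Theta(T)$. After a reset, the second condition of $E_{3}^{t}$ re-triggers as soon as the incumbent's epoch loss bridges the gap $\min_{i\neq i_{t}}(\hat{\ell}(i)+{\ell}^{\prime}(i))-\epsilon_{t}/\eta-1/q_{i_{t},t-1}$, and the first condition only forces this gap to be \emph{positive}, not large; an adversary can keep some competitor's total hovering just above the threshold (e.g., in the MAB case, where unplayed actions' totals are frozen), so that $E_{3}^{t}$ fires on essentially every round. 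Moreover, your premise misreads (\ref{eq:E3}): it certifies that every competitor's total empirical loss exceeds the incumbent's epoch loss by at least $\epsilon_{t}/\eta$ — not that the incumbent has built up $\epsilon_{t}/\eta$ of loss over a competitor — so there is no fixed quantum of loss to charge per firing. The paper's argument is probabilistic and bounds the expected number of \emph{actual action changes}, not firings: precisely because of the weight gap certified by (\ref{eq:E3}), at any firing the updated probabilities of all competitors are suppressed by a factor $\exp(-\epsilon_{t})$ relative to the incumbent, giving (Lemma~\ref{lemma:Event3}) $p_{i,t_{j}}\cdot p_{k^{\prime},t_{j+1}}\leq c^{-2/3}\mbox{mas}^{1/3}(G_{(T)})\,t_{j+1}^{-1/3}$; hence a firing produces a switch only with probability $\tilde{O}(t^{-1/3})$, and summing over the at most $T$ possible firing times yields the $c^{-2/3}\mbox{mas}^{1/3}(G_{(T)})T^{2/3}$ expected switches attributable to $E_{3}^{t}$. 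This "rare switches, not rare firings" step is the missing idea — note that part (ii) itself only claims a bound in expectation, a hint that no deterministic count is available. Without it your part (ii) is unproven, and parts (iii) and (iv), which add $c$ times the switch bound, inherit the gap.
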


\begin{algorithm}[t]
\begin{algorithmic}
\State Initialization: For all $i\in [K]$,  ${\hat{\ell}}_{1}(i)=0$; $t=1$, $\epsilon_{t}=0.5c^{1/3}\mbox{mas}^{1/3}(G_{(T)})/t^{1/3}$, $\eta_{t}=\log(K)/t^{2/3}c^{1/3}\mbox{mas}^{1/3}(G_{(T)})$
\For{$t=1,\ldots, T$  }
\State For all $i\in[K]$, update:
\State \qquad ${p}_{t}(i)=\frac{\exp(-\eta_{t}\hat{L}_{t-1}(i))}{\sum_{j\in[K]}\exp(-\eta_{t}\hat{L}_{t-1}(j) )}$
\State Choose $i_{t} = i_{t-1} $ with probability $1-\epsilon_{t}$,
\State else,   $i_{t} = i $ with probability $\epsilon_{t}{p}_{i,t}$.
\State For all $i\in S_{t}(i_{t}) $, observe the pair $(\ell_{t}(i),i)$.
\State For all $i\in[K]$, update $\hat{L}_{t}(i)=\sum_{n=1}^{t}\hat{\ell}_{n}(i)$,
\State where ${\hat{\ell}}_{t}(i)={\ell}_{t}(i)\textbf{1}(i\in S_{t}(i_{t}))/q_{i,t}$ and 
\State $q_{i,t}=\sum_{j:j\to i}p_{j,t}$.
 \EndFor
 \caption{EXP3.SC}
 \label{alg:HorizonFree}
 \end{algorithmic}
\end{algorithm}

In the PI setting, $\mbox{mas}(G_{t})$ captures the information provided by the feedback graph $G_{t}$. As $\mbox{mas}(G_{t})$ increases, the information provided by $G_t$ about  the losses of actions decreases. 
The regret of the algorithm depends on the $O(T^{2/3})$ instances of $\mbox{mas}(G_{(t)})$ (see Theorem \ref{eq:TheoremRegret} $(i)$). This is because the algorithm makes a selection of a new action  $O(T^{2/3})$ times in expectation (see Theorem \ref{eq:TheoremRegret} $(ii)$), and $G_{t}$ is not available in advance to influence the selection of the action. Also, the ratio $\mbox{mas}(G_{(t)})/\mbox{mas}(G_{(T)})$ is bounded above by $K$ and has no affect on order of $T$. The bounds of the algorithm on the expected regret are tight in two special cases. In the symmetric PI setting, the expected regret of Threshold Based EXP3 is $\tilde{O}(T^{2/3}c^{1/3}\alpha^{1/3}(G_{1}))$ (see Theorem \ref{eq:TheoremRegret} $(iii)$), hence, the algorithm is order optimal. In the MAB setting, the expected regret of Threshold Based EXP3 is $\tilde{O}(T^{2/3}c^{1/3}K^{1/3})$, hence, the algorithm is order optimal. The  state-of-art algorithm for the case without SCs  is known to be order optimal only for these cases as well, and the key challenges for closing this gap are highlighted in the literature\cite{alon2017nonstochastic}. 

EXP3.SC (Algorithm  \ref{alg:HorizonFree}) is another algorithm in PI setting with SC. The key differences between Threshold based EXP3 and EXP3.SC are highlighted here. Unlike Threshold based EXP3, EXP3.SC does not require the knowledge of the number of rounds $T$. Threshold based EXP3 favors the selection of action at regular intervals based on the event $E^{t}$. On contrary, EXP3.SC chooses a new action with probability $\epsilon_{t}$ which is decreasing in $t$. Thus, the algorithm favors exploration in the initial rounds, and favors exploitation as $t$ increases. In Threshold based EXP3, the scaling exponent $\eta$ is a constant dependent on $T$. On contrary, in EXP3.SC, the scaling exponent $\eta_{t}$  is time-varying, and is decreasing in $t$. The following theorem provides the performance guarantees of EXP3.SC.  
\begin{theorem}
The expected regret  (\ref{eq:RegretDef}) of EXP3.SC is at most
\[1.5c^{4/3}\mbox{\emph{mas}}^{1/3}(G_{(T)})T^{2/3}+\frac{2 \log(K)}{\mbox{\emph{mas}}^{2/3}(G_{(T)})}\sum_{j=1}^{n^*}\mbox{\emph{mas}}(G_{(j)}),\]
where $n^*= 0.5\mbox{\emph{mas}}^{1/3}(G_{(T)})T^{2/3}c^{1/3}$.
\end{theorem}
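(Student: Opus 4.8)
The plan is to split the expected regret (\ref{eq:RegretDef}) into the switching cost and the loss regret against the best fixed action, and to bound the two pieces independently. For the switching cost, note that $i_t$ can differ from $i_{t-1}$ only on a round where the ``draw a fresh action'' event fires, and this event has probability $\epsilon_t$ conditioned on the past; hence $\mathbf{E}[\sum_{t=2}^T\mathbf{1}(i_{t-1}\neq i_t)]\le\sum_{t=1}^T\epsilon_t$. Substituting $\epsilon_t=0.5\,c^{1/3}\mbox{mas}^{1/3}(G_{(T)})t^{-1/3}$ and bounding $\sum_{t=1}^T t^{-1/3}\le 3T^{2/3}$ gives at most $1.5\,c^{1/3}\mbox{mas}^{1/3}(G_{(T)})T^{2/3}$ switches in expectation, so the switching cost is at most $1.5\,c^{4/3}\mbox{mas}^{1/3}(G_{(T)})T^{2/3}$, which is exactly the first term of the bound.

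For the loss regret I would run a potential (Hedge) argument on the estimated losses: since $p_t$ is exponential weights on the cumulative estimates $\hat{L}_{t-1}$ with decreasing learning rate $\eta_t$, one has the pathwise inequality $\sum_{t=1}^T\langle p_t,\hat{\ell}_t\rangle-\hat{L}_T(k^{*})\le \log(K)/\eta_T+\tfrac12\sum_{t=1}^T\eta_t\langle p_t,\hat{\ell}_t^{2}\rangle$ for every comparator $k^{*}$, where $\hat{\ell}_t^{2}$ is the coordinate-wise square. Taking expectations, I would relate $\mathbf{E}[\hat{\ell}_t(i)\mid\mathcal{F}_{t-1}]$ to $\ell_t(i)$ and bound the second moment through the standard feedback-graph inequality $\sum_i p_{i,t}/q_{i,t}\le\mbox{mas}(G_t)$, giving $\mathbf{E}[\langle p_t,\hat{\ell}_t^{2}\rangle\mid\mathcal{F}_{t-1}]\lesssim\mbox{mas}(G_t)$. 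Plugging in $\eta_t=\log(K)/(t^{2/3}c^{1/3}\mbox{mas}^{1/3}(G_{(T)}))$ turns the second-order term into $\tfrac{\log(K)}{2c^{1/3}\mbox{mas}^{1/3}(G_{(T)})}\sum_{t=1}^T t^{-2/3}\mbox{mas}(G_t)$, while the leading term $\log(K)/\eta_T=c^{1/3}\mbox{mas}^{1/3}(G_{(T)})T^{2/3}$ and the exploration loss $\sum_t\epsilon_t$ are of the same or smaller order and can be absorbed.

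To extract the truncated sum $\sum_{j=1}^{n^{*}}\mbox{mas}(G_{(j)})$ I would use that the weights $t^{-2/3}$ are non-increasing: by the rearrangement inequality $\sum_{t}t^{-2/3}\mbox{mas}(G_t)\le\sum_{t}t^{-2/3}\mbox{mas}(G_{(t)})$, where $\mbox{mas}(G_{(t)})$ is the sorted (non-increasing) sequence. Splitting at $t=n^{*}$, the head obeys $\sum_{t\le n^{*}}t^{-2/3}\mbox{mas}(G_{(t)})\le\sum_{t\le n^{*}}\mbox{mas}(G_{(t)})$, and the tail is controlled via $\mbox{mas}(G_{(t)})\le\mbox{mas}(G_{(n^{*})})\le\tfrac1{n^{*}}\sum_{j\le n^{*}}\mbox{mas}(G_{(j)})$ for $t>n^{*}$ together with $\sum_{t}t^{-2/3}\le 3T^{1/3}$; since $n^{*}=0.5\mbox{mas}^{1/3}(G_{(T)})T^{2/3}c^{1/3}$ the tail is a vanishing fraction of the head for large $T$. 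Matching the remaining constants yields the stated $\tfrac{2\log(K)}{\mbox{mas}^{2/3}(G_{(T)})}\sum_{j=1}^{n^{*}}\mbox{mas}(G_{(j)})$.

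The main obstacle is the estimator step, because the played action is \emph{not} drawn from $p_t$: with probability $1-\epsilon_t$ the algorithm repeats $i_{t-1}$, so the true probability that the loss of $i$ is observed equals $(1-\epsilon_t)\mathbf{1}(i_{t-1}\to i)+\epsilon_t q_{i,t}$ rather than $q_{i,t}$. Consequently $\hat{\ell}_t$ is biased upward, and on the non-fresh event its conditional second moment involves $\sum_{i:\,i_{t-1}\to i}p_{i,t}/q_{i,t}^{2}$, which is not automatically $O(\mbox{mas}(G_t))$ since it can carry a factor $1/p_{i_{t-1},t}$. I would attack this by conditioning separately on the fresh and non-fresh events, using the lower bound $(1-\epsilon_t)\mathbf{1}(i_{t-1}\to i)+\epsilon_t q_{i,t}\ge\epsilon_t q_{i,t}$ to control the estimator and exploiting that on the non-fresh event the coordinates in $S_t(i_{t-1})$ are observed with certainty (so their contribution is deterministic given $\mathcal{F}_{t-1}$), and by arguing that persisting with $i_{t-1}$ costs little because the decreasing $\eta_t$ makes $p_t$ vary slowly between consecutive fresh draws. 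Showing that the bias and the extra persistence-induced second-moment terms feed only into the already-present $\mbox{mas}(G_t)$ and $\epsilon_t$ factors, rather than producing an uncontrolled $1/p_{i_{t-1},t}$ blow-up, is the crux of the argument and where most of the work concentrates.
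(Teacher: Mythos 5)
Your switching-cost bound coincides with the paper's: both bound the expected number of switches by $\sum_{t=1}^{T}\epsilon_{t}$ and multiply by $c$. The genuine gap is in the loss-regret half, and it is exactly the obstacle you name in your final paragraph and then leave unresolved. Your round-wise Hedge argument needs conditional unbiasedness of $\hat{\ell}_{t}$ and the second-moment bound $\mathbf{E}[\langle p_{t},\hat{\ell}_{t}^{2}\rangle\,|\,\mathcal{F}_{t-1}]\lesssim \mbox{mas}(G_{t})$, and both fail for the reason you state: the played action is not drawn from $p_{t}$, since with probability $1-\epsilon_{t}$ it is $i_{t-1}$, so the probability that the loss of $i$ is observed is $(1-\epsilon_{t})\mathbf{1}(i_{t-1}\to i)+\epsilon_{t}q_{i,t}$ rather than $q_{i,t}$. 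Acknowledging this crux and sketching possible attacks (``conditioning on fresh and non-fresh events'', ``arguing that $p_{t}$ varies slowly between fresh draws'') is not a proof; as written, the central inequality of your argument is unsupported, and everything downstream of it --- the $t^{-2/3}\mbox{mas}(G_{t})$ sum, the rearrangement step, and the head/tail split that produces $\sum_{j\le n^{*}}\mbox{mas}(G_{(j)})$ --- inherits the gap. (Even granting that step, your final coefficient $\log(K)/(2c^{1/3}\mbox{mas}^{1/3}(G_{(T)}))$ does not match the stated $2\log(K)/\mbox{mas}^{2/3}(G_{(T)})$ except for particular ranges of $c$ and $\mbox{mas}$.)

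The paper's proof never works round by round; it works block by block, and the block decomposition is precisely the device that dissolves your obstacle. Let $t_{1},\ldots,t_{\sigma(T)}$ be the rounds at which a fresh draw occurs and $r_{j}=t_{j+1}-t_{j}$. The exponential-weights inequality is applied to the block-aggregated estimates $\ell^{\prime}_{t_{j}+r_{j}-1}(i)$ (this is (\ref{eq:EXP3Final}) from the proof of Theorem 5, reused with the decreasing rates $\eta_{t}$). Conditioned on $(p_{t_{j}},r_{j})$, the action for the entire block is the one drawn from $p_{t_{j}}$, so the block estimate is unbiased for the true block loss, and its conditional second moment is at most $\mbox{mas}(G_{t_{j}:t_{j}+r_{j}-1})\,r_{j}^{2}$ via $\sum_{i}p_{i,t}/q_{i,t}\le\mbox{mas}(G_{t})$. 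The block length is dominated by a geometric variable with parameter $\epsilon_{t_{j}}$, so $\mathbf{E}[r_{j}^{2}]\le 2/\epsilon_{t_{j}}^{2}$, and the ratio $\eta_{t_{j}}/\epsilon_{t_{j}}$ produces the per-block coefficient $2\log(K)/\mbox{mas}^{2/3}(G_{(T)})$. The truncated sum then comes not from rearrangement but from counting blocks: $\mathbf{E}[\sigma(T)]\le\sum_{t}\epsilon_{t}\le n^{*}$, and Jensen's inequality (the partial sums of the non-increasing sequence $\mbox{mas}(G_{(j)})$ are concave in the number of terms) moves the expectation into the summation limit, yielding $\sum_{j=1}^{n^{*}}\mbox{mas}(G_{(j)})$. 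If you want to complete your write-up, the repair is to adopt this block-level analysis rather than to patch the per-round estimator.
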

In symmetric PI and MAB settings, the expected regret of EXP3.SC is $\tilde{O}(c^{4/3}\alpha^{2/3}(G_{1})T^{2/3})$ and  $\tilde{O}(c^{4/3}K^{2/3}T^{2/3})$ respectively. Hence, the algorithm is order optimal in $T$ and $\beta(G_{1:T})$, and  has an additional factor of $c$ in the performance guarantees. In EXP3.SC, the dependency on $T$ is removed at the expense of an additional factor of $c$ in its performance.

In an alternative setting where the number of switches are constraint to be $O(T^{2(1-\beta)})$, it can be shown using Lemma \ref{lemma:lowerBoundSwitches} that the expected regret without SC is at least $ \tilde {\Omega} ((\beta(G_{1:T})^{1/2}T) ^{\beta})$. The two algorithms in this setting  are also simple variations of our two algorithms: Threshold based EXP3 and EXP3.SC. Threshold based EXP3 can be adapted by using threshold $\gamma_{t}=O(T^{2\beta-1})$, $\epsilon_{t}=O(\log(t)/ {2\beta-1})$ and $\eta=O(T^{-\beta})$. EXP3.SC can be adapted by using  $\epsilon_{t} = O(t^{-(2\beta-1)})$ and $\eta_{t}=O(t^{-\beta})$. These adapted algorithms would be order optimal in MAB and symmetric PI settings as well. 
\section{Performance Evaluation}
In this section, we numerically compare the performance of Threshold based EXP3 with EXP3 SET and Batch EXP3 in PI and MAB setups with SC respectively. 
 We do not compare the performance of our algorithm with the ones proposed in the Expert setting with SC because in MAB and PI setups, the player needs to balance the exploration-exploitation trade-off, while in the Expert setting the player is only concerned about the exploitation. Hence, there is a fundamental  discontinuity  in the design of algorithms as we move from the Expert to the PI setting. This gap is also evident from the discontinuity in the lower bounds in these settings, for the Expert setting the expected regret is at least $\tilde{\Omega}(\sqrt{\log(K)T})$, while  for the PI setting the expected regret is at least $\tilde{\Omega}(\beta(G_{1:T})^{1/3}T^{2/3})$, for $\beta(G_{1:T})>1$ which excludes the clique feedback graph.

We evaluate these algorithms by simulations because in real data sets, the adversary's strategy is not necessarily unfavorable for the players. Hence, the trends in the performance can vary widely across different data sets. For this reason, in the literature  only algorithms in stochastic setups rather than adversarial setups are typically evaluated on real data sets \cite{katariya2016dcm,zong2016cascading}. In our simulations, the adversary uses the Algorithm \ref{alg:AdversaryStrategy}, and $c=0.35$. 

Figure \ref{fig:PI} shows that the Threshold based EXP3 outperforms EXP3 SET in the presence of SC. Additionally, the expected regret and the number of switches of EXP3 SET grow linearly with $T$. These observations are in line with our theoretical results presented in Lemma \ref{lemma:lowerBoundSwitches}. The results presented here are for $G_{t}=G$, $\alpha(G)=5$ and $K=25$. Similar trends were observed for different value of $\alpha(G)$ and $K$.  

Figure \ref{fig:MAB} shows that Threshold based EXP3 outperforms Batch EXP3 in MAB setup with SC. The gap in the performance of these algorithm increases with $T$ (Figure \ref{fig:MAB}(a)). Additionally, the number of switches performed by threshold based EXP3 is larger than the number of switches performed by Batch EXP3 (Figure \ref{fig:MAB}(b) and (d)). The former algorithm utilizes the information about the observed losses via $E_{3}^{t}$ to balance the trade off between the regret and the number of switches. On contrary, Batch EXP3 does not utilize any information from the observed losses, and switches the action only after playing an action $\tilde{O}(T^{1/3})$ times. Note that MAB setup reveals the least information about the losses, and performance gap due to utilization of this information is significant (Figure \ref{fig:MAB}). This gap in performance grows as  $\beta(G_{1:T})$ decreases.  

In summary, Threshold Based EXP3 outperforms both EXP3 SET and Batch EXP3 in PI and MAB settings with SC respectively. Threshold Based EXP3 fills a gap in the literature by providing a solution for the PI setting with SC, and improves upon the existing literature in the MAB setup. 
\begin{figure}[t]
   \centering
\begin{subfigure}[b]{0.23\textwidth}
\includegraphics[width=\textwidth]{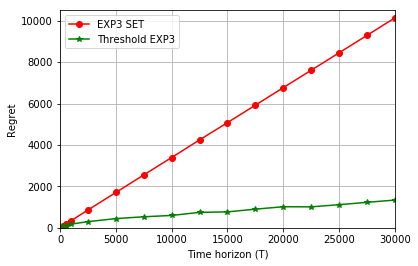}
\caption{For $\alpha(G)=5$}
\end{subfigure}
\begin{subfigure}[b]{0.23\textwidth}
\includegraphics[width=\textwidth]{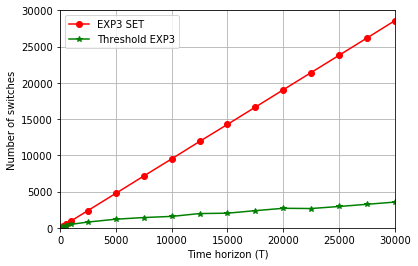}
\caption{For $\alpha(G)=5$}
\end{subfigure}
\caption{Performance evaluation of EXP3 SET and Threshold based EXP3 for K=25}
\label{fig:PI}
\end{figure}
\begin{figure}[t]
       \centering
\begin{subfigure}[b]{0.23\textwidth}
\includegraphics[width=\textwidth]{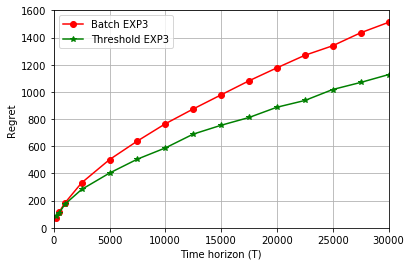}
\caption{For $K=5$}
\end{subfigure}
\begin{subfigure}[b]{0.23\textwidth}
\includegraphics[width=\textwidth]{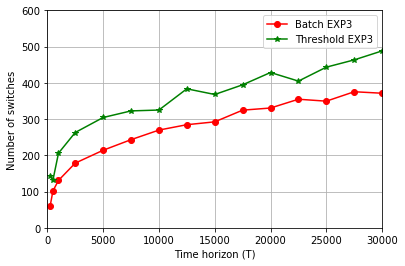}
\caption{For $K=5$}
\end{subfigure}
\begin{subfigure}[b]{0.23\textwidth}
\includegraphics[width=\textwidth]{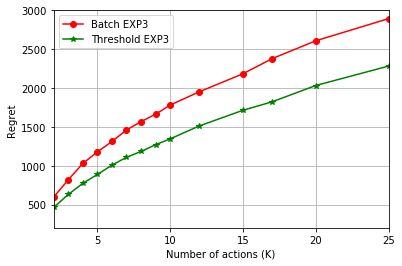}
\caption{For $T=20000$}
\end{subfigure}
\begin{subfigure}[b]{0.23\textwidth}
\includegraphics[width=\textwidth]{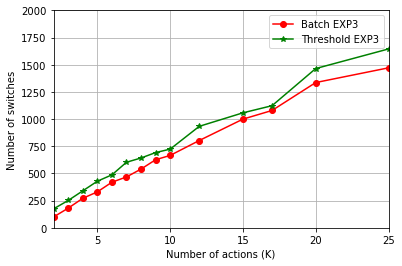}
\caption{For $T=20000$}
\end{subfigure}
\caption{Performance evaluation of Batch EXP3 and Threshold based EXP3 in MAB setting}
\label{fig:MAB}
\end{figure}

\section{Conclusion} 

This work focuses on online learning in the PI setting with SC in the presence of an adversary. The lower bound on the expected regret 
is presented in the  PI setup in terms of independence sequence number. 
There is a  need to design new algorithms in this setting because any algorithm that is order optimal without SC   is necessarily sub-optimal in the presence of SC. 
Two algorithms, Threshold Based EXP3 and EXP3.SC, are proposed and their  performance is evaluated in terms of expected regret. 
These algorithms are order optimal in $T$  in  two cases: symmetric PI and MAB setup.  
Numerical comparisons show that the Threshold Based EXP3 outperforms EXP3 SET and Batch EXP3 in PI setting with SC.

As future work, algorithms can be designed in a partially informed setting and a fully informed setting. In the partially informed setting, the feedback graph $G_{t}$ at round $t$ is revealed following the action at round $t-1$. Thus, the feedback graphs are revealed one at a time in advance at the beginning of each round. In the fully informed setting, the entire sequence of feedback graphs $G_{1:T}$ is revealed before the game starts. Since the adversary is aware of  $G_{1:T}$, these settings are important to study from the player's end as well. Note that without SC, the algorithms in both the partially informed and fully informed settings can exploit the feedback graphs at every round in a greedy manner, and perform an action accordingly. Hence, the algorithm in partially informed setting is also optimal in a fully informed setting in the absence of SC. On the contrary, in the presence of SC, a greedy exploitation of the feedback structure is not possible at every round. Hence, in fully informed setting with SC, the player chooses an action based on $G_{1:T}$ such that the selected action balances the trade off between the regret and the SC. Thus, the partially informed and fully informed settings of PI are of particular interest in the presence of SC, and is an interesting area for further study.

\medskip
\bibliographystyle{apalike}
\bibliography{BanditsWithSwitchingCost}
\small

\appendix
\section{Proof of Theorem 1}
\begin{proof}
 Without loss of generality, let the independent sequence set $\mathcal{I}(G_{1:T})$  formed of actions (or ``arms'') from $1$ to $\beta(G_{1:T})$. Given the sequence of feedback graphs $G_{1:T}$, let $T_{i}$ be the number of times the action $i\in \mathcal{I}(G_{1:T})=[\beta(G_{1:T})]$ is selected by the player in $T$ rounds. Let $T_{\Delta}$ be the total number of times the actions are selected from the set $[K]\backslash\mathcal{I}(G_{1:T})$. Let $\mathbb{E}_{i}$ denote expectation conditioned on $X=i$, and $\mathbb{P}_{i}$ denote the probability conditioned on $X=i$. Additionally, we define $\mathbb{P}_{0}$ as the probability  conditioned on event $\epsilon_{1}=0$. Therefore, under $\mathbb{P}_{0}$ , all the actions in the independent sequence set, i.e. $i\in \mathcal{I}(G_{1:T})$, incur an expected regret of $1/2$, whereas, the expected regret of actions $i\in [K]\backslash\mathcal{I}(G_{1:T})$ is $1/2+\epsilon_{2}$. Let $\mathbb{E}_{0}$ be the corresponding conditional expectation. For all $i\in[K]$ and $t\leq T$, $\ell_{t}({i})$ and $\ell^{c}_{t}({i})$ denote the unclipped and clipped loss of the action $i$ respectively.
 Assuming the unclipped losses are observed by the player, then $\mathcal{F}$ is the sigma field generated by the unclipped losses, and $S_{t}{(i_{t})}$ is the set of actions whose losses are observed at time $t$, following the selection of $i_{t}$, according to the feedback graph $G_{t}$. The observed sequence of unclipped losses will be referred as $\ell^{o}_{1:T}$. Additionally, $\mathcal{F}^{\prime}$ is the sigma field generated by the clipped losses, for all $t\in [T]$, $\ell^{\prime}_{t}(i)$ where $i\in S_{t}{(i_{t})}$, and the observed sequence of clipped losses will be referred as $\ell^{\prime o}_{1:T}$. By definition, $\mathcal{F}^{\prime}\subseteq \mathcal{F}$.
 
 Let $i_{1},\ldots,i_{T}$ be the sequence of actions selected by a player over the time horizon $T$. Then, the regret $R^{c}$ of the player corresponding to clipped loses is
\begin{equation}\label{eq:RegretClipped}
R^{c}=\sum_{t=1}^{T}\ell^{c}_{t}(i_{t})+c\cdot M_{s}-\min_{i\in [K]}\sum_{t=1}^{T}\ell^{c}_{t}(i),
\end{equation}
where $M_{s}$ is the number of switches in the  action selection sequence $i_{1},\ldots,i_{T}$, and $c$ is the cost of each switch in action. Now, we define the regret $R$ which corresponds to the unclipped loss function in Algorithm 1 as following
\begin{equation}\label{eq:RegretUnClip}
R=\sum_{t=1}^{T}\ell_{t}(i_{t})+c\cdot M_{s}-\min_{i\in [K]}\sum_{t=1}^{T}\ell_{t}(i).
\end{equation}
Using \cite[Lemma 4]{dekel2014bandits}, we have
\begin{equation}
\mathbb{P}\bigg(\mbox{For all }t\in[T], \frac{1}{2}+W_{t}\in \bigg[\frac{1}{6},\frac{5}{6}\bigg]\bigg)\geq \frac{5}{6}.
\end{equation}
Thus, for all $T>\max\{\beta(G_{1:T}),6\}$, we have $\epsilon_{1}=\epsilon_{2}<1/6$. If  $B=\{\mbox{For all } t\in[T]:{1}/{2}+W_{t}\in [{1}/{6},{5}/{6}]\}$ occurs and $\epsilon_{1}=\epsilon_{2}<1/6$, then for all $i\in [K]$,  $\ell^{c}_{t}(i)=\ell_{t}(i)$  which implies $R^{c}=R$ (see (\ref{eq:RegretClipped}) and (\ref{eq:RegretUnClip})). Now, if the event $B$ does not occur, then the losses at any time $t$ satisfy
\[\ell_{t}(i)-\ell^{c}_{t}(i)\leq (\epsilon_{1}+\epsilon_{2}).\]
Therefore, we have
\[c\cdot M_{s}\leq R^{c}\leq R \leq c\cdot M_{s}+(\epsilon_{1}+\epsilon_{2})T .\]
Now, for $T>\max\{\beta(G_{1:T}),6\}$, we have
\begin{equation}\label{eq:Regret}
    \mathbb{E}[R]-   \mathbb{E}[R^{c}]=(1-\mathbb{P}(B))\mathbb{E}[R-R^{c}|B \mbox{ does not occur}]\leq \frac{(\epsilon_{1}+\epsilon_{2})T}{6}.
\end{equation}
Thus, (\ref{eq:Regret}) lower bounds the actual regret $R^{c}$ in terms of regret $R$. Now, we derive the lower bound on regret $R$ corresponding to the unclipped loses. Using the definition of $R$, we have

\begin{equation}\label{eq:Regret1}
\begin{split}
&\mathbb{E}[R]=\max_{i\in [K]}\mathbb{E}[\sum_{t=1}^{T}\ell_{t}(i_{t})-\sum_{t=1}^{T}\ell_{t}(i)]+ \mathbb{E}[M_{s}]\\
&=\frac{1}{\beta(G_{1:T})}\sum_{i=1}^{\beta(G_{1:T})}\mathbb{E}_{i}[\sum_{t=1}^{T}\ell_{t}(i_{t})-\min_{i\in [K]}\sum_{t=1}^{T}\ell_{t}(i)]+ \mathbb{E}[M_{s}]\\
&=\frac{1}{\beta(G_{1:T})}\sum_{i=1}^{\beta(G_{1:T})}\mathbb{E}_{i}\bigg[\sum_{j\in\mathcal{I}(G_{1:T})\backslash\{i\}}\frac{1}{2}T_{j}+\bigg(\frac{1}{2}-\epsilon_{1}\bigg)T_i+\bigg(\frac{1}{2}+\epsilon_{2}\bigg)T_{\Delta}-\bigg(\frac{1}{2}-\epsilon_{1}\bigg)T\Bigg]+ \mathbb{E}[M_{s}]\\
&=\frac{1}{\beta(G_{1:T})}\sum_{i=1}^{\beta(G_{1:T})}\mathbb{E}_{i}\Bigg[\sum_{j=1}^{\beta(G_{1:T})}\frac{1}{2}T_{j}+\bigg(\frac{1}{2}+\epsilon_{2}\bigg)T_{\Delta}-\epsilon_{1}T_{i}-\bigg(\frac{1}{2}-\epsilon_{1}\bigg)T\Bigg]+ \mathbb{E}[M_{s}]\\
&\stackrel{(a)}{=}\frac{1}{\beta(G_{1:T})}\sum_{i=1}^{\beta(G_{1:T})}\mathbb{E}_{i}\Bigg[\epsilon_{2}T_{\Delta}+\epsilon_{1}(T-T_{i})\Bigg]+ \mathbb{E}[M_{s}]\\
&\stackrel{(b)}{\geq}\epsilon_{1}\Bigg(T-\frac{1}{\beta(G_{1:T})}\sum_{i=1}^{\beta(G_{1:T})}\mathbb{E}_{i}\big[T_{i}\big]+\mathbb{E}\big[T_{\Delta}\big]\Bigg)+\mathbb{E}[M_{s}],
\end{split}
\end{equation}
where $(a)$ follows from $\sum_{j=1}^{\beta(G_{1:T})}T_{j}+T_{\Delta}=T$, and $(b)$ follows from $\epsilon_1=\epsilon_2$.  

Now, we upper bound the $\mathbb{E}_{i}\big[T_{i}\big]$ in (\ref{eq:Regret1}) to obtain the lower bound on the expected regret $\mathbb{E}[R]$. Since the player is deterministic, the event $\{i_{t}=i\}$ is $\mathcal{F}^{\prime}$ measurable. Therefore, we have
\[\mathbb{P}_{i}(i_{t}=i)-\mathbb{P}_{0}(i_{t}=i)\leq d_{TV}^{\mathcal{F}^{\prime}}({P}_{0},{P}_{i})\stackrel{(a)}{\leq} d_{TV}^{\mathcal{F}}({P}_{0},{P}_{i}),\]

where $d_{TV}^{\mathcal{F}}({P}_{0},{P}_{i})=\sup_{A\in \mathcal{F}}\abs{\mathbb{P}_{0}(A)-\mathbb{P}_{i}(A)}$ is the total variational distance between the two probability measures, and $(a)$ follows from $\mathcal{F}^{\prime}\subseteq \mathcal{F}$.  Summing the above equation over $t\in[T]$ and $i\in \mathcal{I}(G_{1:T})$ yields
\[\sum_{i=1}^{\beta(G_{1:T})}\big(\mathbb{E}_{i}[T_{i}]-\mathbb{E}_{0}[T_{i}]\big)\leq T\cdot\sum_{i=1}^{\beta(G_{1:T})}d_{TV}^{\mathcal{F}}({P}_{0},{P}_{i}).\]
Rearranging the above equation and using $\sum_{i=1}^{\beta(G_{1:T})}\mathbb{E}_{0}[T_{i}]=\mathbb{E}_{0}[\sum_{i=1}^{\beta(G_{1:T})}T_{i}]=T$, we get
\[\sum_{i=1}^{\beta(G_{1:T})}\mathbb{E}_{i}[T_{i}]\leq T\cdot\sum_{i=1}^{\beta(G_{1:T})}d_{TV}^{\mathcal{F}}({P}_{0},{P}_{i})+T.\]
Combining the above equation with (\ref{eq:Regret1}), we get
\begin{equation}\label{eq:RegretKLDiv}
\begin{split}
        \mathbb{E}[R]&\geq \epsilon_{1}T-\frac{\epsilon_{1}T}{\beta(G_{1:T})}\cdot\sum_{i=1}^{\beta(G_{1:T})}d_{TV}^{\mathcal{F}}({P}_{0},{P}_{i})-\frac{\epsilon_{1}T}{\beta(G_{1:T})}+\frac{\epsilon_1}{\beta(G_{1:T})}\sum_{i=1}^{\beta(G_{1:T})}\mathbb{E}_{i}\big[T_{\Delta}\big]+\mathbb{E}[M_{s}]\\
        &\stackrel{(a)}{\geq}\frac{\epsilon_{1}T}{2}-\frac{\epsilon_{1}T}{\beta(G_{1:T})}\cdot\sum_{i=1}^{\beta(G_{1:T})}d_{TV}^{\mathcal{F}}({P}_{0},{P}_{i})+{\epsilon_1}\mathbb{E}\big[T_{\Delta}\big]+\mathbb{E}[M_{s}],
\end{split}
\end{equation}
where $(a)$ uses the fact that $\beta(G_{1:T})>1$. Next, we upper bound the second term in the right hand side of (\ref{eq:RegretKLDiv}). Using Pinsker's inequality, we have
\begin{equation}\label{eq:pinsker}
d_{TV}^{\mathcal{F}}({P}_{0},{P}_{i})\leq \sqrt{\frac{1}{2}D_{KL}(\mathbb{P}_{0}(\ell^{o}_{1:T})||\mathbb{P}_{i}(\ell^{o}_{1:T}))},
\end{equation}
where $\ell^{o}_{1:T}$ are the losses observed by the player over the time horizon $T$. Using the chain rule of relative entropy to decompose $D_{KL}(\mathbb{P}_{0}(\ell^{o}_{1:T})||\mathbb{P}_{0}(\ell^{o}_{1:T}))$, we get
\begin{equation}\label{eq:KLChainRule}
\begin{split}
      D_{KL}(\mathbb{P}_{0}(\ell^{o}_{1:T})||\mathbb{P}_{i}(\ell^{o}_{1:T}))&=\sum_{t=1}^{T} D_{KL}(\mathbb{P}_{0}(\ell_{t}^{o}|\ell^{o}_{1:t-1})||\mathbb{P}_{i}(\ell^{o}_{t}|\ell^{o}_{1:t-1}))\\
      &=\sum_{t=1}^{T} D_{KL}(\mathbb{P}_{0}(\ell^{o}_{t}|\ell^{o}_{\rho^{*}(t)})||\mathbb{P}_{i}(\ell^{o}_{t}|\ell^{o}_{\rho^{*}(t)})),
\end{split}
\end{equation}
where $\rho^{*}(t)$ is the set of time instances $0\leq k\leq t$ encountered when operation $\rho(.)$ in Algorithm 1 is applied recursively to $t$.
Now, we deal with each term $D_{KL}(\mathbb{P}_{0}(\ell^{o}_{t}|\ell^{o}_{\rho^{*}(t)})||\mathbb{P}_{i}(\ell^{o}_{t}|\ell^{o}_{\rho^{*}(t)}))$ in the summation individually. For $i \in \mathcal{I}(G_{1:T})$, we separate this computation into four cases: $i_{t}$ is such that loss of action $i$ is observed at both time instances $t$ and $\rho(t)$ i.e. $i\in S_{t}(i_{t})$ and $i\in S_{t}(i_{\rho(t)})$; $i_{t}$ is such that loss of action $i$ is observed at time instance $t$ but not at time instance $\rho(t)$  i.e. $i\in S_{t}(i_{t})$ and $i\notin S_{t}(i_{\rho(t)})$; $i_{t}$ is such that loss of action $i$ is not observed at  time instance $t$ but is observed at time instance $\rho(t)$  i.e. $i\notin S_{t}(i_{t})$ and $i\in S_{t}(i_{\rho(t)})$; $i_{t}$ is such that loss of action $i$ is not observed at both time instances $t$ and $\rho(t)$ i.e. $i\notin S_{t}(i_{t})$ and $i\notin S_{t}(i_{\rho(t)})$. 

\textit{Case 1}: Since the loss of action $i$ is observed from  the independent sequence set $\mathcal{I}(G_{1:T})$ at both the time instances, the loss distribution for the action $i$ is $\ell^{o}_{t}(i)|\ell^{o}_{\rho^{*}(t)}\sim \mathcal{N}(\ell_{\rho(t)}(i),\sigma^{2})$ for both $\mathbb{P}_{0}$ and $\mathbb{P}_{i}$. For all $j\in[K]\backslash\mathcal{I}(G_{1:T})$, the loss distribution is $\ell^{o}_{t}(j)|\ell^{o}_{\rho^{*}(t)}\sim \mathcal{N}(\ell_{\rho(t)}(i)+\epsilon_{1}+\epsilon_{2},\sigma^{2})$ under both $\mathbb{P}_{0}$ and $\mathbb{P}_{i}$. 

\textit{Case 2}: Since the loss of action $i$ is observed from the independent sequence set $\mathcal{I}(G_{1:T})$ at time instance $t$ but not at $\rho(t)$, therefore, there exists an action  $k^{\prime}\in\mathcal{I}(G_{1:T})\backslash\{i\}$ from the independent sequence set which was observed at time instance $\rho(t)$. Then, the loss distribution for the action $i$ is $\ell_{t}^{o}(i)|\ell^{o}_{\rho^{*}(t)}\sim \mathcal{N}(\ell_{\rho(t)}(k^{\prime}),\sigma^{2})$ under $\mathbb{P}_{0}$, and $\ell^{o}_{t}(i)|\ell^{o}_{\rho^{*}(t)}\sim \mathcal{N}(\ell_{\rho(t)}(k^{\prime})-\epsilon_{1},\sigma^{2})$ under $\mathbb{P}_{i}$. For all $j\in[K]\backslash\mathcal{I}(G_{1:T})$, the loss distribution is $\ell^{o}_{t}(j)|\ell^{o}_{\rho^{*}(t)}\sim \mathcal{N}(\ell_{\rho(t)}(k^{\prime})+\epsilon_{2},\sigma^{2})$ under both $\mathbb{P}_{0}$ and $\mathbb{P}_{i}$. 

\textit{Case 3}:Since the action $i$ is observed from  the independent sequence set $\mathcal{I}(G_{1:T})$ at time instance $\rho(t)$ but not at $t$, therefore, there exists an action  $k^{\prime}\in\mathcal{I}(G_{1:T})\backslash\{i\}$ from the independent sequence set which was observed at time instance $t$. Then, the loss distribution for the arm $k^{\prime}$ is $\ell^{o}_{t}(k^{\prime})|\ell^{o}_{\rho^{*}(t)}\sim \mathcal{N}(\ell_{\rho(t)}(i),\sigma^{2})$ under $\mathbb{P}_{0}$, and $\ell^{o}_{t}(k^{\prime})|\ell^{o}_{\rho^{*}(t)}\sim \mathcal{N}(\ell_{\rho(t)}(i)+\epsilon_{1},\sigma^{2})$ under $\mathbb{P}_{i}$. For all $j\in[K]\backslash\mathcal{I}(G_{1:T})$, the loss distribution is $\ell^{o}_{t}(j)|\ell^{o}_{\rho^{*}(t)}\sim \mathcal{N}(\ell_{\rho(t)}(i)+\epsilon_{1}+\epsilon_{2},\sigma^{2})$ under both $\mathbb{P}_{0}$ and $\mathbb{P}_{i}$. 

\textit{Case 4}: Let $k^{*}$ be the arm from the independent sequence set observed at time instance $\rho(t)$. Since the arm $i$ is not observed from the independent sequence set $\mathcal{I}(G_{1:T})$ at the time instances $t$ and $\rho(t)$, therefore the loss distribution for all arms $k^{\prime}\in\mathcal{I}(G_{1:T})\backslash\{i\}$ is $\ell^{o}_{t}(k^{\prime})|\ell^{o}_{\rho^{*}(t)}\sim \mathcal{N}(\ell_{\rho(t)}(k^{*}),\sigma^{2})$ for both $\mathbb{P}_{0}$ and $\mathbb{P}_{i}$. For all $j\in[K]\backslash\mathcal{I}(G_{1:T})$, the loss distribution is $\ell^{o}_{t}(j)|\ell^{o}_{\rho^{*}(t)}\sim \mathcal{N}(\ell_{\rho(t)}(k^{*})+\epsilon_{2},\sigma^{2})$ under both $\mathbb{P}_{0}$ and $\mathbb{P}_{i}$. 

Therefore,  we have
\begin{equation}\label{eq:KLDivergence}
\begin{split}
D_{KL}(\mathbb{P}_{0}(\ell^{o}_{t}|\ell^{o}_{\rho^{*}(t)})||\mathbb{P}_{i}(\ell^{o}_{t}|\ell^{o}_{\rho^{*}(t)}))&=\mathbb{P}_{0}(i\in S_{t}{(i_{t})},i\notin S_{\rho(t)}({i_{\rho(t)}}))\cdot D_{KL}(\mathcal{N}(0,\sigma^2)||\mathcal{N}(-\epsilon_{1},\sigma^2)) \\
&+\mathbb{P}_{0}(i\notin S_{t}({i_{t}}),i\in S_{\rho(t)}({i_{\rho(t)}}))\cdot D_{KL}(\mathcal{N}(0,\sigma^2)||\mathcal{N}(\epsilon_{1},\sigma^2))\\
&=\frac{\epsilon_{1}^2}{2\sigma^2}\mathbb{P}_{0}(B_{t}),
\end{split}
\end{equation}
where $B_{t}=\{i\in S_{t}({i_{t}}),i\notin S_{\rho(t)}({i_{\rho(t)}})\cup i\notin S_{t}{(i_{t})},i\in S_{\rho(t)}{(i_{\rho(t)})}\}$. The event $B_{t}$ implies that at least one of the following events are true:\\
$(i)$ The player has switched  between the feedback systems $S_{t}{(k_{1})}$ and $S_{\rho(t)}{(k_{2})}$ such that $i\in S_{t}{(k_{1})}$ but $i\notin S_{\rho(t)}{(k_{2})}$ or vice-versa. \\
$(ii)$ The player did not change the selection of action, however, the feedback system has changed between $\rho(t)$ and $t$ such that $i$ has become observable or vice versa. This can occur only if the fixed action belongs to $[K]\setminus\mathcal{I}(G_{1:T})$. \\
Let $N_{i}$ be the number of times a player switches from the feedback system which includes $i$  to the feedback system which does not include $i$ and vice-versa.  Then, using (\ref{eq:KLChainRule}) and (\ref{eq:KLDivergence}), we have
\begin{equation}\label{eq:KlFinal}
    D_{KL}(\mathbb{P}_{0}(\ell^{o}_{1:T})||\mathbb{P}_{i}(\ell^{o}_{1:T}))\leq \frac{\epsilon_{1}^2\omega(\rho)}{2\sigma^2}\mathbb{E}_{0}[N_{i}+T_{\Delta}],
\end{equation}
where $\omega(\rho)$ is the width of process $\rho(.)$ (see Definition 2 in \cite{dekel2014bandits}) and is bounded above by $2\log_2(T)$. Combining (\ref{eq:pinsker}) and (\ref{eq:KlFinal}), we have
\begin{equation}\label{eq:TVFinal}
\sup_{A\in \mathcal{F}}(\mathbb{P}_{0}(A)-\mathbb{P}_{i}(A)) \leq \frac{\epsilon_{1}}{\sigma}\sqrt{\log_2(T)\mathbb{E}_{0}[N_{i}+T_{\Delta}]}.
\end{equation}
If $M_{s}\geq \epsilon_{1}T$, then $\mathbb{E}[R^{\prime}]>\epsilon_{1}T$. Thus, the claimed lower bound follows. Now, let us assume $M_{s}\leq \epsilon_{1}T$. For all $i\in\mathcal{I}(G_{1:T})$, we have
\begin{equation}
\begin{split}
\mathbb{E}_{0}[M_{s}]-\mathbb{E}_{i}[M_{s}]&=\sum_{m=1}^{\floor{\epsilon_{1}T}}\mathbb{P}_{0}(M_{s}\geq m)-\mathbb{P}_{i}(M_{s}\geq m))\\
&\leq \epsilon_{1}T\cdot d_{TV}^{\mathcal{F}}(\mathbb{P}_{0},\mathbb{P}_{i}).
\end{split}
\end{equation}
Using the above equation, we have
\begin{equation}\label{eq:NumSwitches}
\begin{split}
\mathbb{E}_{0}[M_{s}]-\mathbb{E}[M_{s}]&=\frac{1}{\beta(G_{1:T})}\sum_{i=1}^{\beta(G_{1:T})}(\mathbb{E}_{0}[M_{s}]-\mathbb{E}_{i}[M_{s}])\\
&\leq \frac{\epsilon_{1}T}{\beta(G_{1:T})}\sum_{i=1}^{\beta(G_{1:T})}d_{TV}^{\mathcal{F}}(\mathbb{P}_{0},\mathbb{P}_{i}).
\end{split}
\end{equation}
Now, combining (\ref{eq:Regret}), (\ref{eq:RegretKLDiv}), (\ref{eq:TVFinal})and (\ref{eq:NumSwitches}), we obtain
\begin{equation}
\begin{split}
\mathbb{E}[R^{\prime}]&\geq \frac{\epsilon_{1}T}{6}-\frac{\epsilon_{1}T}{\beta(G_{1:T})}\sum_{i=1}^{\beta(G_{1:T})}\frac{\epsilon_{1}}{\sigma}\sqrt{\log_2(T)\mathbb{E}_{0}[N_{i}+T_{\Delta}]}+{\epsilon_1}\mathbb{E}\big[T_{\Delta}\big] +c\cdot\mathbb{E}_{0}[M_{s}]\\
&\stackrel{(a)}{\geq} \frac{\epsilon_{1}T}{6}-\frac{\epsilon_{1}^2T}{\sigma\sqrt{\beta(G_{1:T})}}\sqrt{2\log_2(T)\mathbb{E}_{0}[M_{s}+T_{\Delta}]}+{\epsilon_1}\mathbb{E}\big[T_{\Delta}\big] +c\cdot\mathbb{E}_{0}[M_{s}]\\
&\stackrel{(b)}{\geq} \frac{\epsilon_{1}T}{6} -\frac{\epsilon_{1}^{4}T^2\log_{2}(T)}{c\cdot\sigma^2\beta(G_{1:T})}+{\epsilon_1}\mathbb{E}\big[T_{\Delta}\big] +c\cdot \bigg(\frac{\epsilon_{1}^{4}T^2\log_{2}(T)}{2c^2\cdot\sigma^2\beta(G_{1:T})}-\mathbb{E}_0\big[T_{\Delta}\big]\bigg),\\
&\stackrel{}{\geq} \frac{c^{1/3}\beta(G_{1:T})^{1/3}T^{2/3}}{54\log_{2}(T)}-\frac{c^{1/3}\beta(G_{1:T})^{1/3}T^{2/3}}{162\log_{2}(T)}+(\epsilon_1-c)\mathbb{E}_0\big[T_{\Delta}\big]\\
&\stackrel{(c)}{\geq}\frac{c^{1/3}\beta(G_{1:T})^{1/3}T^{2/3}}{81\log_{2}(T)},
\end{split}
\end{equation}
where $(a)$ follows from the concavity of $\sqrt{x}$ and $\sum_{i}^{\beta(G_{1:T})}N_{i}\leq 2M_{s}$, $(b)$ follows from the fact that the right hand side is minimized for $\sqrt{\mathbb{E}_{0}[M_{s}+T_{\Delta}]}=\epsilon^2T\sqrt{\log_{2}(T)}/2c\sigma\sqrt{\beta(G_{1:T})}$, and $(c)$ follows from the assumption $T\geq 27c\log_{2}^{3/2}(T)/\beta(G_{1:T})^{2}$, which implies $\epsilon_1\geq c$. The claim of the theorem  now follows.
\end{proof}

\section{Proof of Lemma 2}
We have that $\beta(G_{1:T})$ actions are non adjacent in the entire sequence of feedback graphs $G_{1:T}$. Let $1,2,\dots \beta(G_{1:T})$ belong to the $\mathcal{I}(G_{1:T})$. Then, the adversary selects an action uniformly at random from the set $\mathcal{I}(G_{1:T})$ say $j$, and assigns the loss sequence to action $j$ using independent Bernoulli random variable with parameter $0.5-\epsilon$, where $\epsilon=\sqrt{\beta(G_{1:T})/T)}$. For all $i\in \mathcal{I}(G_{1:T})/\{j\}$,  losses are assigned using independent Bernoulli random variable with parameter $0.5$.  For all $i\notin \mathcal{I}(G_{1:T})$, the losses are assigned using independent Bernoulli random variable with parameter $1$. The proof of the lemma follows along the same lines as in Theorem 5 in (\cite{alon2017nonstochastic}). 
\section{Proof of Theorem 3}
Proof of this theorem uses the results from Theorem 1. Since the loss sequence is assigned independently to each sub-sequence $U_{m}$ where $m\in [M]$. Using Theorem 1, there exists a constant $b_{m}$ such that
\begin{equation}
\begin{split}
    &\mathbf{E}\Bigg[\sum_{t=1}^{T}(\ell_{t}(i_{t})\mathbf{1}(G_{t}\in U_{m})+cW_{m}\Bigg]-\min_{i\in U_{m}}\sum_{t=1}^{T}(\ell_{t}(i)\mathbf{1}(G_{t}\in U_{m})\\ 
    &\geq b_{m}c^{1/3}\beta(U_{m})^{1/3}N(U_{m})^{2/3}/\log(T),
\end{split}
\end{equation}
where $W_{m}$ is number of switches performed within the sequence $U_{m}$. 
Since
\[\sum_{m\in[M]}W_m\leq \sum_{t=1}^{T}\mathbf{1}(i_{t}\neq i_{t-1}),\]
there exist a constant $b$ such that the expected regret of any algorithm $\mathcal{A}$ is at least 
\[b \, c^{1/3}\sum_{m\in [M]}\beta(U_{m})^{1/3}N(U_{m})^{2/3}/ \log T.\]
\section{Proof of Lemma 4}
\begin{proof}
The proof follows from contradiction and is along the same lines as the proof of Theorem 4 in \cite{dekel2014bandits}. Let $\mathcal{A}$ performs at most $\tilde{O}((\beta(G_{1:T})^{1/2}T)^{\alpha})$ switches for any sequence of loss function over $T$ rounds with $\beta+\alpha/2<1$.  Then, there exists a real number $\gamma$ such that $\beta<\gamma<1-\alpha/2$. Then, assign $c=(\beta(G_{1:T})^{1/2}T)^{3\gamma-2}$. Thus, the expected regret, including the switching cost, of the algorithm is 
\[\tilde{O}((\beta(G_{1:T})^{1/2}T)^{\beta}+(\beta(G_{1:T})^{1/2}T)^{3\gamma-2}(\beta(G_{1:T})T)^{\alpha})=\tilde{o}(\beta(G_{1:T})^{1/2}T)^{\gamma},\]
over a sequence of losses assigned by the adversary because $\beta<\gamma$ and $\alpha<2-2\gamma$. However, according to Theorem 1, the expected regret is at least $\tilde{\Omega}(\beta(G_{1:T})^{1/3}(\beta(G_{1:T})^{1/2}T)^{(3\gamma-2)/3}T^{2/3})=\tilde{\Omega}((\beta(G_{1:T}) T)^{\gamma})$. Hence, by contradiction, the proof of the lemma follows.

\end{proof}
\section{Proof of Theorem 5}
\begin{proof}
Let $t_{1},t_{2}\ldots ,t_{\sigma(T)}$ be the sequence of time instances at which the event $E^{t}$ occurs during the duration $T$ of the game. We define $\{r_{j}=t_{j+1}-t_{j}\}_{1\leq j\leq T }$ as the sequence of inter-event times between the events $E^{t}$.  Let $\mbox{mas}(G_{(1)}),\ldots, \mbox{mas}(G_{(T)})$ denote the sequence in the decreasing order of size of maximal acyclic graphs, i.e. $\mbox{mas}(G_{(1)})$ (or $\mbox{mas}(G_{(T)})$) is the maximum (or minimum) size of maximal acyclic graph observed in sequence $G_{1:T}=\{G_1 , \ldots G_T\}$. Using the definition of $E^{t}$,  note that $r_{j}$ is a random variable bounded by $T^{1/3}c^{2/3}/\mbox{mas}(G_{(T)})^{1/3}$. For all $1\leq j\leq \sigma(T)$, the ratio of total weights of actions at round $t_{j}$ and $t_{j+1}$ is 

\begin{equation}\label{eq:EXP2ratio}
\begin{split}
\frac{W_{t_{j+1}}}{W_{t_{j}}}&=\sum_{i\in [K]}\frac{w_{i,t_{j+1}}}{W_{t_{j}}}\\
&=\sum_{i\in [K]}\frac{w_{i,t_{j}}\exp(-\eta \ell^{\prime}_{t_{j}+r_{j}-1}(i))}{W_{t_{j}}}\\
&=\sum_{i\in [K]}p_{i,t_{j}}\exp(-\eta \ell^{\prime}_{t_{j}+r_{j}-1}(i)) \\
&\stackrel{(a)}{\leq} \sum_{i\in [K]} p_{i,t_{j}}\bigg(1-\eta \ell^{\prime}_{t_{j}+r_{j}-1}(i)+\frac{1}{2}\eta^2 \ell^{\prime2}_{t_{j}+r_{j}-1}(i)\bigg)\\
&= 1-\eta\sum_{i\in [K]} p_{i,t_{j}}\cdot \ell^{\prime}_{t_{j}+r_{j}-1}(i)+\frac{\eta^2}{2} \sum_{i\in [K]} p_{i,t_{j}} \cdot\ell^{\prime2}_{t_{j}+r_{j}-1}(i),
\end{split}
\end{equation}

where $(a)$ follows from the fact that, for all $x\geq 0$, $e^{-x}\leq 1-x-x^{2}/2$. Now, taking logs on both sides of (\ref{eq:EXP2ratio}), summing over $t_{1},t_{2},\ldots t_{\sigma(T)}$, and using $\log(1+x)\leq x$ for all $x>-1$, we get
\begin{equation}\label{eq:EXP3ratio}
    \log\frac{W_{t_{\sigma(T)+1}}}{W_{1}} \leq -\eta\sum_{j=1}^{\sigma(T)}\sum_{i\in [K]} p_{i,t_{j}}\cdot \ell^{\prime}_{t_{j}+r_{j}-1}(i)+\frac{\eta^2}{2}\sum_{j=1}^{\sigma(T)} \sum_{i\in [K]} p_{i,t_{j}}\cdot \ell^{\prime2}_{t_{j}+r_{j}-1}(i).
\end{equation}
For all actions $k^{\prime}\in [K]$, we also have
\begin{equation}\label{eq:EXP3LowerBound}
\log\frac{W_{t_{\sigma(T)+1}}}{W_{1}}\geq \log\frac{w_{k^{\prime},t_{\sigma(T)+1}}}{W_{1}}\geq -\eta\sum_{j=1}^{\sigma(T)} \ell^{\prime}_{t_{j}+r_{j}-1}(k^{\prime})-\log (K).
\end{equation}
Combining (\ref{eq:EXP3ratio}) and (\ref{eq:EXP3LowerBound}), for all $k^{\prime}\in [K]$, we obtain
\begin{equation}\label{eq:EXP3Final}
\sum_{j=1}^{\sigma(T)}\sum_{i\in [K]} p_{i,t_{j}}\cdot \ell^{\prime}_{t_{j}+r_{j}-1}(i)-\sum_{j=1}^{\sigma(T)} \ell^{\prime}_{t_{j}+r_{j}-1}(k^{\prime}) \leq \frac{\log(K)}{\eta} + \frac{\eta}{2}\sum_{j=1}^{\sigma(T)} \sum_{i\in [K]} p_{i,t_{j}}\cdot \ell^{\prime2}_{t_{j}+r_{j}-1}(i).
\end{equation}
Now, for all $i\in [K]$, the conditional expectation of $\ell^{\prime}_{t_{j}+r_{j}-1}(i)$ is
\begin{equation}\label{eq:Loss}
\begin{split}
    \mathbb{E}\bigg[\ell^{\prime}_{t_{j}+r_{j}-1}(i)\Big|p_{t_{j}},r_{j}\bigg]& = \sum_{t=t_{j}}^{t_{j}+r_{j}-1}\sum_{k^{\prime}:i\in S_{t}(k^{\prime})}p_{k^{\prime},t_{j}}\cdot \frac{\ell_{t}(i)}{q_{i,t}}, \\
    &=\sum_{t=t_{j}}^{t_{j}+r_{j}-1}\frac{\ell_t(i)}{q_{i,t}}\cdot\sum_{k^{\prime}:i\in S_{t}(k^{\prime})}p_{k^{\prime},t_{j}},\\
    &=\sum_{t=t_{j}}^{t_{j}+r_{j}-1}\ell_t(i).
\end{split}
\end{equation}
Therefore,  we have that  for all $i\in [K]$, the conditional expectation
\begin{equation}\label{eq:MinEq}
\mathbb{E}\bigg[\sum_{j=1}^{\sigma(T)} \ell^{\prime}_{t_{j}+r_{j}-1}(i)\Big|\{p_{t_{j}},r_{j}\}_{1\leq j\leq \sigma(T)}] \bigg]=\sum_{j=1}^{\sigma(T)}\sum_{t=t_{j}}^{t_{j}+r_{j}-1}\ell_t(i)=\sum_{t=1}^{T}\ell_t(i).
\end{equation}
Now, the expectation of second term in right hand side of (\ref{eq:EXP3Final}) is
\begin{equation}\label{eq:squareLoss}
\begin{split}
\mathbb{E} \left[\sum_{j=1}^{\sigma(T)}\sum_{i\in[K]}p_{i,t_{j}}\cdot\ell^{\prime2}_{t_{j}+r_{j}-1}(i)\right] &= \mathbb{E}\Bigg[\sum_{j=1}^{\sigma(T)}\mathbb{E}\bigg[\sum_{i\in[K]}p_{i,t_{j}}\ell^{\prime2}_{t_{j}+r_{j}-1}(i)|\{p_{t_{j}},r_{j}\}_{1\leq j\leq \sigma(T)}\bigg]\Bigg]\\
&\stackrel{(a)}{\leq}\mathbb{E}\Bigg[\sum_{j=1}^{\sigma(T)}\mbox{mas}(G_{t_j:t_{j}+r_{j}-1}){r^2_{j}}\Bigg],\\
\end{split}
\end{equation}
where $\mbox{mas}(G_{t_j:t_{j}+r_{j}-1})=\max_{n\in [t_{j},t_{j}+r_{j}-1]}\mbox{mas}(G_{n})$, and $(a)$ follows from the fact that, for all $i\in[K]$ and $t\leq T$, $\ell_{t}(i)\leq 1$, and  $\sum_{i\in[K]}p_{i,t}/q_{i,t}\leq \mbox{mas}(G_{t})$\cite[Lemma 10]{alon2017nonstochastic}. 

Now, we bound $\sum_{j=1}^{\sigma(T)}\mbox{mas}(G_{t_j:t_{j}+r_{j}-1}){r^2_{j}}$. We write the following optimization problem:
\begin{equation}
    \max_{\{r_{j}\}_{1\leq j\leq T}}\sum_{j=1}^{T}\mbox{mas}(G_{t_j:t_{j}+r_{j}-1}){r^2_{j}}, \mbox{ subject to}
\end{equation}
\[\sum_{j=1}^{T}r_{j}=T,\]
\[0\leq r_{j}\leq \frac{T^{1/3}c^{2/3}}{\mbox{mas}^{1/3}(G_{(T)})}.\]
Since the objective function is submodular and the constraints are linear, the ratio of the solution of the greedy algorithm and the optimal solution is  at most $(1-1/e)$ (\cite{nemhauser1978best}). Therefore, the optimal solution $o^{*}$ of the above optimization problem is 
\begin{equation}\label{eq:OptSolution}
    o^{*}\leq \sum_{t=1}^{t^{*}}\frac{T^{2/3}\mbox{mas}(G_{(t)})c^{4/3}}{(1-1/e)\mbox{mas}^{2/3}(G_{(T)})},
\end{equation}
where $t^{*}={\ceil{T^{2/3}c^{-2/3}\mbox{mas}^{1/3}(G_{(T)})}}$. Using (\ref{eq:EXP3Final}), (\ref{eq:Loss}), (\ref{eq:MinEq}), (\ref{eq:squareLoss}) and (\ref{eq:OptSolution}), we have 
\begin{equation}\label{eq:AggLoss}
    \mathbb{E}\Bigg[\sum_{j=1}^{\sigma(T)}\sum_{i\in [K]} p_{i,k_{j}} \sum_{t=k_{j}}^{k_{j}+r_{j}-1}\ell_t(i)-\sum_{j=1}^{T} \ell_{t}({k^{\prime}})\Bigg]\leq \frac{\log(K)}{\eta} + \frac{\eta}{2}\sum_{t=1}^{t^{*}}\frac{T^{2/3}c^{4/3}\mbox{mas}(G_{(t)})}{(1-1/e)\mbox{mas}^{2/3}(G_{(T)})}.
\end{equation}
Additionally, the player switches its action only if $E^{t}$ is true. Thus, using (\ref{eq:AggLoss}) and $c(i,j)=c$, for all $i,j\in[K]$, we have
\begin{equation}\label{eq:regret}
    R^{\mathcal{A}}(l_{1:T},\mathcal{C})\leq \frac{\log(K)}{\eta} + \frac{\eta}{2}\sum_{t=1}^{t^{*}}\frac{T^{2/3}c^{4/3}\mbox{mas}(G_{(t)})}{(1-1/e)\mbox{mas}^{2/3}(G_{(T)})}+ c\cdot\mathbb{E}[\sum_{t=2}^{T}\textbf{1}(i_{t}\neq i_{t-1})].
\end{equation}

Now, we bound $\mathbb{E}[\sum_{t=2}^{T}\textbf{1}(i_{t}\neq i_{t-1})]$. $E^{t}_{1}$ occurs with probability 1, and does not contribute to any SC. $E^{t}_{2}$ can lead to at most $\ceil{T^{2/3}c^{-2/3}\mbox{mas}^{1/3}(G_{(T)})}$ switches. Now, let $E^{t}_{3}$ causes $N_{T}$ switches. Then, we have
\begin{equation}
\begin{split}
\mathbb{E}[N_{T}]&=\mathbb{E} \left[\sum_{j=1}^{\sigma(T)}\textbf{1}(i_{t_{j+1}}\neq i_{t_{j}},E^{t_j}_{3} \mbox{ is true})\right]\\
&=\mathbb{E}\Bigg[\sum_{j=1}^{\sigma(T)}\mathbb{E}\bigg[\textbf{1}(i_{t_{j+1}}\neq i_{t_{j}}, E^{t_j}_{3} \mbox{ is true})\bigg|\{p_{t_{j}},r_{j}\}_{1\leq j\leq\sigma(T)}\bigg]\Bigg]\\
&\leq \mathbb{E}\Bigg[\sum_{j=1}^{\sigma(T)}\mathbb{E}\bigg[\sum_{i\in[K], k^{\prime}\in[K]\backslash \{i\} }\mathbb{P}(i_{t_{j}}=i\big|E^{t_j}_{3} \mbox{ is true})\mathbb{P}(i_{t_{j+1}}=k^{\prime}\big|i_{t_{j}}=i)\bigg|\{p_{t_{j}},r_{j}\}_{1\leq j\leq \sigma(T)}\bigg]\Bigg]\\
&= \mathbb{E}\bigg[\sum_{j=1}^{\sigma(T)}\sum_{i\in[K], k^{\prime}\in[K]\backslash \{i\} } p_{i,t_{j}}p_{k^{\prime},t_{j+1}}\bigg]\\
&\stackrel{(a)}{\leq} \sum_{t=1}^{T} c^{-2/3}\mbox{mas}^{1/3}(G_{(T)})t^{-1/3} =  c^{-2/3}\mbox{mas}^{1/3}(G_{(T)})T^{2/3},
\end{split}
\end{equation}
where $(a)$ follows from Lemma \ref{lemma:Event3} in this section. Thus, the number of switches are $2c^{-2/3}\mbox{mas}^{1/3}(G_{(T)})T^{2/3}$, and the SC is $2c^{1/3}\mbox{mas}^{1/3}(G_{(T)})T^{2/3}$.

Part $(iii)$ of the theorem follows by combining the results from $(i)$ and $(ii)$. Part $(iv)$ follows from the fact that if $G_{t}$ is undirected, $\mbox{mas}(G_{t})=\alpha(G_{t})$.
\end{proof}

\begin{lemma}\label{lemma:Event3} Given $i\in[K]$ is chosen at time instance $t_{j}$, for all $k^{\prime}\in [K]\backslash\{i\}$, we have
\[p_{i,t_{j}}\cdot p_{k^{\prime},t_{j+1}}\leq (t_{j+1})^{-1/3}.\]
\end{lemma}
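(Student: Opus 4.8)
The plan is to convert the claim into an inequality between the exponential weights of Threshold Based EXP3 and then read off the required decay from the triggering event $E_3^{t_{j+1}}$ of \eqref{eq:threshold}. Write $i=i_{t_j}$, which is the arm played throughout the inter-event interval $[t_j,t_{j+1}-1]$. On this interval the weights are frozen and are updated only at $t_{j+1}$, so telescoping the multiplicative updates together with the resets of $\ell^{\prime}$ gives $w_{m,t_{j+1}}=\exp(-\eta(\hat{\ell}_{t_{j+1}-1}(m)+\ell^{\prime}_{t_{j+1}-1}(m)))$ for every arm $m$, and in particular $w_{i,t_{j+1}}=w_{i,t_j}\exp(-\eta\,\ell^{\prime}_{t_{j+1}-1}(i))=p_{i,t_j}W_{t_j}\exp(-\eta\,\ell^{\prime}_{t_{j+1}-1}(i))$. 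Bounding the normalisation by the single term $W_{t_{j+1}}\ge w_{i,t_{j+1}}$ lets the factor $p_{i,t_j}$ cancel, which is the crucial algebraic move:
\[
p_{i,t_j}\,p_{k^{\prime},t_{j+1}}=\frac{p_{i,t_j}\,w_{k^{\prime},t_{j+1}}}{W_{t_{j+1}}}\ \le\ \frac{w_{k^{\prime},t_{j+1}}\exp(\eta\,\ell^{\prime}_{t_{j+1}-1}(i))}{W_{t_j}}.
\]

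I would then invoke the first clause of $E_3^{t_{j+1}}$, evaluated with current arm $i$: for the arm $k^{\prime}\ne i$ it states $\hat{\ell}_{t_{j+1}-1}(k^{\prime})+\ell^{\prime}_{t_{j+1}-1}(k^{\prime})>\epsilon_{t_{j+1}}/\eta+1/q_{i,t_{j+1}-1}$, hence $w_{k^{\prime},t_{j+1}}<\exp(-\epsilon_{t_{j+1}})\exp(-\eta/q_{i,t_{j+1}-1})$. Substituting into the reduction gives
\[
p_{i,t_j}\,p_{k^{\prime},t_{j+1}}\ <\ \frac{\exp(-\epsilon_{t_{j+1}})}{W_{t_j}}\,\exp\!\big(\eta(\ell^{\prime}_{t_{j+1}-1}(i)-1/q_{i,t_{j+1}-1})\big),
\]
so the $1/q_{i,t_{j+1}-1}$ offset engineered into $E_3$ appears exactly where it is needed, to be set against the importance-weighted running loss $\ell^{\prime}_{t_{j+1}-1}(i)$ of the played arm.

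The main obstacle is to show that the trailing factor $W_{t_j}^{-1}\exp(\eta(\ell^{\prime}_{t_{j+1}-1}(i)-1/q_{i,t_{j+1}-1}))$ does not exceed $1$, since $\ell^{\prime}_{t_{j+1}-1}(i)$ carries the wrong sign. The mechanism I would exploit is that $E_3$ does not fire anywhere in the interior of $[t_j,t_{j+1}-1]$: the running loss of the played arm has therefore not yet crossed the switching threshold at round $t_{j+1}-1$, and since each round inflates it by at most $1/q_{i,\cdot}$ --- the increment bound underlying \eqref{eq:E3}, which in turn uses the self-loop $i\in S_t(i)$ and hence $q_{i,t_{j+1}-1}\ge p_{i,t_j}$ --- the quantity $\ell^{\prime}_{t_{j+1}-1}(i)$ overshoots by at most one such importance-weight step. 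This pins $\ell^{\prime}_{t_{j+1}-1}(i)-1/q_{i,t_{j+1}-1}$ to the scale of the competing cumulative loss feeding $W_{t_j}$, and the two cancel. Making this cancellation tight --- so that the $1/q$ correction matches the importance-weight inflation of $\ell^{\prime}(i)$ rather than leaving an uncontrolled multiplicative constant --- is where essentially all of the care lies, and is the step I expect to be hardest. Once it is in place the bound collapses to $p_{i,t_j}p_{k^{\prime},t_{j+1}}\le\exp(-\epsilon_{t_{j+1}})$, and substituting the standing calibration $\epsilon_{t_{j+1}}\ge\tfrac{1}{3}\log(t_{j+1}c^2/\mbox{mas}(G_{(T)}))$ yields $\exp(-\epsilon_{t_{j+1}})\le(t_{j+1})^{-1/3}$ in the operative regime; the residual $(\mbox{mas}(G_{(T)})/c^2)^{1/3}$ is harmless and is reabsorbed when this per-pair estimate is summed against $\sum_i p_{i,t}/q_{i,t}\le\mbox{mas}(G_t)$ in the proof of Theorem~\ref{eq:TheoremRegret}.
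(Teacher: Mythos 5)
Your proposal follows the same skeleton as the paper's proof: reduce $p_{i,t_j}\,p_{k^{\prime},t_{j+1}}$ to the weight ratio $w_{k^{\prime},t_{j+1}}/w_{i,t_{j+1}}$, invoke the first clause of $E_3^{t_{j+1}}$ in (\ref{eq:threshold}) together with the one-round increment bound $1/q_{i,t_{j+1}-1}$ on $\ell^{\prime}(i)$ (i.e.\ the inequality (\ref{eq:E3})), and then convert $\exp(-\epsilon_{t_{j+1}})$ into a $t_{j+1}^{-1/3}$ decay via the calibration $\epsilon_{t}\geq \log(tc^{2}/\mbox{mas}(G_{(T)}))/3$. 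You are also right that the clean statement of the lemma only holds up to the residual factor $c^{-2/3}\mbox{mas}^{1/3}(G_{(T)})$, which is exactly what the paper proves in (\ref{eq:ProbOnSwitches}) and what the proof of Theorem~\ref{eq:TheoremRegret} actually uses.

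However, there is a genuine gap, and it sits precisely at the step you yourself flag as the hardest: you never prove that the trailing factor $W_{t_j}^{-1}\exp\big(\eta(\ell^{\prime}_{t_{j+1}-1}(i)-1/q_{i,t_{j+1}-1})\big)$ is at most $1$, and the cancellation mechanism you sketch cannot establish it. After applying (\ref{eq:E3}) your chain reads $p_{i,t_j}\,p_{k^{\prime},t_{j+1}}\leq \exp(-\epsilon_{t_{j+1}})/W_{t_j}$, so what is required is $W_{t_j}=\sum_{m}\exp(-\eta\hat{\ell}_{t_j}(m))\geq 1$, equivalently a bound on the \emph{historical} cumulative loss $\hat{\ell}_{t_j}(i)$ of the currently played arm. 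Nothing in $E_1,E_2,E_3$ controls this quantity: once every arm's importance-weighted cumulative loss exceeds $\log(K)/\eta$, the normalization $W_{t_j}$ drops below $1$, and the overshoot-by-one-step argument you describe relates $\ell^{\prime}(i)$ to the \emph{other} arms' running losses through $E_3$, not to $W_{t_j}$; there is no inequality chaining the two. You should also know that the paper's own write-up does not close this hole either: its first displayed ``equality'' replaces the weights $w_{k^{\prime},1}$ and $w_{i,t_j}$ by the probabilities $p_{k^{\prime},1}$ and $p_{i,t_j}$, which silently discards the factor $K/W_{t_j}\geq 1$ --- an inequality pointing in the wrong direction for an upper-bound chain. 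So your attempt reproduces the paper's argument with its weak point made explicit rather than repaired; a complete proof of either version needs an additional ingredient lower-bounding $W_{t_j}$ (or, equivalently, upper-bounding $\hat{\ell}_{t_j}(i)$ at $E_3$-triggered switching times), which neither you nor the paper supplies.
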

\begin{proof}
Given $i$ is chosen at time instance $t_{j}$, for all $k^{\prime}\in[K]\backslash\{i\}$, we  have
\begin{equation}\label{eq:ProbBound}
\begin{split}
    \frac{p_{k^{\prime},t_{j+1}}}{p_{i,t_{j+1}}}&=\frac{p_{k^{\prime},1}\exp(-\eta \hat{\ell}_{t_{j+1}}(k^{\prime}))}{p_{i,t_{j}}\exp(-\eta {\ell}^{\prime}_{t_{j}+r_{j}-1}(i))}\\
    &\stackrel{(a)}{=}\frac{p_{k^{\prime},1}\exp(-\eta( \hat{\ell}_{t_{j}}(k^{\prime})+{\ell}^{\prime}_{t_{j}+r_{j}-1}(k^{\prime})))}{p_{i,t_{j}}\exp(-\eta {\ell}^{\prime}_{t_{j}+r_{j}-1}(i))}\\
     &\stackrel{(b)}{\leq} \frac{\exp\big(-\eta (\hat{\ell}_{t_{j}}(k^{\prime})+{\ell}^{\prime}_{t_{j}+r_{j}-1}(k^{\prime})-{\ell}^{\prime}_{t_{j}+r_{j}-1}(i))\big)}{ p_{i,t_{j}}} \\
     &\stackrel{(c)}{\leq} \frac{\exp\big(-\eta (\epsilon_{t_{j+1}}/\eta)\big)}{K p_{i,t_{j}}}\\
     &= \frac{\exp\big(- \epsilon_{t_{j+1}}\big)}{ p_{i,t_{j}}},
\end{split}
\end{equation}
where $(a)$ follows from the fact that $\hat{\ell}_{t_{j+1}}(k^{\prime})=\hat{\ell}_{t_{j}}(k^{\prime})+{\ell}^{\prime}_{t_{j}+r_{j}-1}(k^{\prime})$; $(b)$ follows from $p_{k^{\prime},1}=1/K$; $(c)$ follows from the fact that 
for all $k\in[K]\backslash\{i\}$, $\hat{\ell}_{k,t-1}-{\ell}^{\prime}_{i,t-1}>\epsilon_{t}/\eta$ as the increment in ${\ell}^{\prime}_{i,t-1}$ is bounded by $1/q_{i,t-1}$.
Now, replacing $\epsilon_{t}\geq\log(tc^2/\mbox{mas}(G_{(T)}))/3$ in  (\ref{eq:ProbBound}), we have 
\begin{equation} \label{eq:ProbOnSwitches}
p_{i,t_{j}}\cdot p_{k^{\prime},t_{j+1}}\leq c^{-2/3}\mbox{mas}^{1/3}(G_{(T)})t_{j+1}^{-1/3}.
\end{equation}
\end{proof}
\section{Proof of Theorem 6}
\begin{proof}
We borrow the notations from the proof of Theorem 5. Using the fact that $\eta_{t}$ is decreasing in $t$ and (\ref{eq:EXP3Final}), we have
\begin{equation}
\sum_{j=1}^{\sigma(T)}\sum_{i\in [K]} p_{i,t_{j}}\cdot \ell^{\prime}_{t_{j}+r_{j}-1}(i)-\min_{k^\prime \in [K]}\sum_{j=1}^{\sigma(T)} \ell^{\prime}_{t_{j}+r_{j}-1}(k^{\prime}) \leq \frac{\log(K)}{\eta_{T}} + \sum_{j=1}^{\sigma(T)} \frac{\eta_{t_j}}{2}\sum_{i\in [K]} p_{i,t_{j}}\cdot \ell^{\prime2}_{t_{j}+r_{j}-1}(i).
\end{equation}
Now, taking expectation on both the sides and using the fact that expectation of the $\min (.)$ is smaller than the $\min(.)$ of the expectation, we have
\begin{equation}\label{eq:horizonFree}
\begin{split}
    &\mathbf{E}\Bigg[\sum_{j=1}^{\sigma(T)}\sum_{i\in [K]} p_{i,t_{j}}\cdot \ell^{\prime}_{t_{j}+r_{j}-1}(i)\Bigg]-\min_{k^\prime \in [K]}\mathbf{E}\Bigg[\sum_{j=1}^{\sigma(T)} \ell^{\prime}_{t_{j}+r_{j}-1}(k^{\prime})\Bigg] \\
    &\leq \frac{\log(K)}{\eta_{T}} + \mathbf{E}\Bigg[\sum_{j=1}^{\sigma(T)} \frac{\eta_{t_j}}{2}\epsilon_{t_j}\mathbf{E}\Bigg[\sum_{i\in [K]} p_{i,t_{j}}\cdot \ell^{\prime2}_{t_{j}+r_{j}-1}(i)|p_{t_{j}},r_{j},\mathbf{1}(i_{t} \mbox{ is selected using } p_{t})\Bigg]\Bigg],\\
    &\stackrel{(a)}{\leq} \frac{\log(K)}{\eta_{T}} + \mathbf{E}\Bigg[\sum_{j=1}^{\sigma(T)} \frac{\eta_{t_j}}{2}\epsilon_{t_j}\mathbf{E}[\mbox{mas}(G_{t_j:t_{j}+r_{j}-1}){r^2_{j}}|\mathbf{1}(i_{t} \mbox{ is selected using } p_{t})]\Bigg],\\
    &\stackrel{(b)}{\leq}\frac{\log(K)}{\eta_{T}} + \mathbf{E}\Bigg[\sum_{j=1}^{\sigma(T)} \frac{\eta_{t_j}}{2}\epsilon_{t_j}\frac{2\cdot\mbox{mas}(G_{t_j:t_{j}+r_{j}-1})}{\epsilon_{t_j}^2}\Bigg],\\
     &\stackrel{}{=}\frac{\log(K)}{\eta_{T}} + \mathbf{E}\Bigg[\sum_{j=1}^{\sigma(T)} \frac{\eta_{t_j}}{2}\frac{2\cdot\mbox{mas}(G_{t_j:t_{j}+r_{j}-1})}{\epsilon_{t_j}}\Bigg],\\
     &\stackrel{(c)}{\leq}\frac{\log(K)}{\eta_{T}} + \mathbf{E}\Bigg[\sum_{j=1}^{\sigma(T)} \frac{2 \log(K)}{\mbox{mas}^{2/3}(G_{(T)})}{\mbox{mas}(G_{(j)})}\Bigg],\\
     &\stackrel{(d)}{\leq} \frac{\log(K)}{\eta_{T}} + \sum_{j=1}^{\mathbf{E}[\sigma(T)]} \frac{2 \log(K)}{\mbox{mas}^{2/3}(G_{(T)})}{\mbox{mas}(G_{(j)})}
\end{split}
\end{equation}
where $(a)$ follows from (\ref{eq:squareLoss}), $(b)$ follows from the fact that since the probability of selecting a new action is at most $\epsilon_{t_j}$, the mean and the variance of the geometric random variable $r_{j}$ is bounded by $1/\epsilon_{t_j}^2$ and $(1-\epsilon_{t_j})/\epsilon_{t_j}^2$ respectively, $(c)$ follows from the value of $\eta_{t}$ and $\epsilon_{t}$, and $(d)$ follows from the fact that $\mbox{mas}(G_{(j)})/\mbox{mas}(G_{(T)})$ is a monotonic non increasing sequence in $j$, therefore the summation is a concave function and the inequality follows from the Jensen's inequality. 

Now, we bound the $\mathbf{E}[\sigma(T)]$ in (\ref{eq:horizonFree}). This also gives a bound on the number of switches performed by the algorithm. We have
\begin{equation}
\begin{split}
\mathbf{E}[\sigma(T)] &=\sum_{t=1}^{T}\mathbf{E}[\mathbf{1}(i_{t}\neq i_{t-1} )],\\
&\leq \sum_{t=1}^{T}\epsilon_{t},\\
&\leq 0.5\mbox{mas}^{1/3}(G_{(T)})T^{2/3}c^{1/3}
\end{split}
\end{equation}
\end{proof}
\end{document}